\documentclass[10pt]{article}

\usepackage[letterpaper,top=0.68in,bottom=0.76in,left=0.75in,right=0.75in]{geometry}
\usepackage[T1]{fontenc}
\usepackage{lmodern}
\usepackage{amsmath,amssymb,amsthm,mathtools}
\usepackage{graphicx}
\usepackage{tikz}
\usetikzlibrary{cd,arrows.meta,positioning,fit,backgrounds}
\usepackage{iftex}
\ifPDFTeX
\usepackage[protrusion=true,expansion=true]{microtype}
\else
\usepackage[protrusion=true,expansion=false]{microtype}
\fi
\usepackage{enumitem}
\usepackage{booktabs}
\usepackage{array}
\usepackage{xcolor}
\usepackage{soul}
\soulregister\citep7
\soulregister\citet7
\soulregister\cite7
\soulregister\ref7
\usepackage{pdfpages}

\newcommand{\hlyellow}[1]{{\sethlcolor{white} \hl{#1}}}
\newcommand{\hlrev}[1]{{\sethlcolor{white} \hl{#1}}}

\usepackage[colorlinks=true,linkcolor=blue!50!black,citecolor=blue!50!black,urlcolor=blue!50!black]{hyperref}
\usepackage[numbers,sort&compress]{natbib}
\usepackage{titlesec}
\usepackage{abstract}
\usepackage{placeins}
\usepackage{titling}
\usepackage[font={footnotesize,sf},labelfont={bf,footnotesize,sf},textfont={footnotesize}]{caption}
\usepackage{orcidlink}

\AtBeginDocument{%
  \setlength{\abovedisplayskip}{10pt plus 2pt minus 2pt}%
  \setlength{\belowdisplayskip}{10pt plus 2pt minus 2pt}%
  \setlength{\abovedisplayshortskip}{7pt plus 2pt minus 2pt}%
  \setlength{\belowdisplayshortskip}{7pt plus 2pt minus 2pt}%
}
\setlist{nosep,leftmargin=*}
\titleformat{\section}{\sffamily\bfseries\large}{\thesection.}{0.55em}{}
\titleformat{\subsection}{\sffamily\bfseries\normalsize}{\thesubsection.}{0.55em}{}
\titleformat{\subsubsection}{\sffamily\bfseries\normalsize}{\thesubsubsection.}{0.55em}{}
\titlespacing*{\section}{0pt}{2ex plus 0.5ex minus 0.2ex}{1ex}
\titlespacing*{\subsection}{0pt}{1.5ex plus 0.3ex minus 0.2ex}{0.75ex}
\titlespacing*{\subsubsection}{0pt}{0.75ex plus 0.25ex minus 0.15ex}{0.35ex}

\pretitle{\begin{center}\sffamily\bfseries\Large}
\posttitle{\par\end{center}\vspace{-0.75em}}

\preauthor{\begin{center}\sffamily\small}
\postauthor{\par\end{center}\vspace{-1.15em}}
\predate{}
\postdate{}

\newtheorem{theorem}{Theorem}
\newtheorem{proposition}[theorem]{Proposition}

\theoremstyle{definition}
\newtheorem{definition}{Definition}
\newtheorem{observation}[theorem]{Observation}
\newtheorem{remark}{Remark}
\newtheorem*{openproblem}{Open Problem}

\newcommand{\K}{\mathcal{K}}
\newcommand{\Sschema}{\mathcal{S}}
\newcommand{\V}{\mathcal{V}}
\newcommand{\Set}{\mathbf{Set}}
\newcommand{\Obj}{\operatorname{Obj}}
\newcommand{\Lan}{\operatorname{Lan}}
\newcommand{\colim}{\operatorname{colim}}
\newcommand{\N}{\mathbb{N}}
\newcommand{\Prov}{\mathsf{Prov}}

\title{\textbf{
Self-Revising Discovery Systems for Science: A Categorical Framework for Agentic Artificial Intelligence}}

\author{%
\begin{tabular}{c}
Fiona Y. Wang\textsuperscript{1,2} \quad Markus J. Buehler\textsuperscript{2,3,4*}\\[-0.1em]
\footnotesize \textsuperscript{1}Laboratory for Atomistic and Molecular Mechanics, MIT\\[-0.15em]
\footnotesize \textsuperscript{2}Department of Biological Engineering, MIT\\[-0.15em]
\footnotesize \textsuperscript{3}Departments of Civil and Environmental Engineering and Mechanical Engineering,\\[-0.15em]
\footnotesize \textsuperscript{4}Center for Computational Science and Engineering, Schwarzman College of Computing, MIT\\[-0.15em]
\footnotesize Cambridge, MA 02139, USA; \textsuperscript{*}Corresponding author: \texttt{mbuehler@mit.edu}
\end{tabular}
}
\date{}

\begin{document}
\maketitle

\begin{abstract}
\noindent
\hlyellow{Scientific discovery is not merely the production of better answers, but the crossing of a threshold into a new representational regime in which previously unnameable evidence becomes expressible, testable, and reusable. The central difficulty for agentic artificial intelligence is that probabilistic prediction and optimization operate within a chosen hypothesis space: they can rank alternatives once the admissible variables, artifacts, and tests are fixed, but they do not by themselves identify when those representational commitments have become inadequate. Discovery often occurs precisely at this boundary, when new objects, morphisms, descriptors, or verifiers must be introduced for the evidence to become expressible.}
\hlyellow{We develop a category-theoretic formalism for this threshold. In a fixed regime with schema category $\Sschema_b$, the committed scientific state is a copresheaf $I_t:\Sschema_b\to\Set$, with realized provenance represented by the category of elements $\int_{\Sschema_b}I_t$ or by a multicategorical analogue for multi-parent artifacts. Search updates this state within the same schema. Discovery is a verified transition $u:\Sschema_b\to\Sschema_{b^{\prime}}$ that preserves accepted prior artifacts, transports old evidence by left Kan extension, and exposes residual content not generated by transport. The formalism therefore characterizes discovery as a change in admissible representation, rather than as a subjective novelty judgment or a higher model score.}
\hlyellow{We instantiate the framework in Builder/Breaker and CategoryScienceClaw. Builder/Breaker revises a protein-mechanics symbolic world model under a minimum description length gate, retaining rejected edits and retractions; the accepted law expresses within-chain flexibility as all-mode elastic compliance conditioned by slow collective-mode participation. CategoryScienceClaw extends ScienceClaw with proof-carrying typed artifacts, lineage, gates, stress tests, and claim records. Its high-entropy-alloy fatigue case rejects threshold-only $\Delta K_{th}$ screening and records a two-axis damage-tolerance representation separating crack-growth threshold from Paris-slope stability. The resulting claim is a reproducible public-data hypothesis and prospective validation target, not a substitute for new fatigue testing. Together, these cases show that categorical structure can make representational novelty explicit, auditable, and operational in AI-driven scientific discovery.}
\end{abstract}

\smallskip
\noindent\textbf{Keywords:} agentic AI; scientific discovery; mechanics; category theory; regime transition; endofunctor; minimum description length; provenance; multi-agent systems; materials science.

% =====================================================================
\section{Introduction}
% =====================================================================

Artificial intelligence is now embedded in most stages of the scientific process. Foundation models retrieve and summarize literature, propose hypotheses, write and debug code, run and interpret simulations, design proteins and materials, and draft figures and reports. Agentic systems built on top of these models call external tools, coordinate multiple specialized subsystems, manage long-running workflows, and increasingly take partial responsibility for experimental decisions, both in computational pipelines and in autonomous laboratories \citep{doi:10.1021/acsengchemau.3c00053,ni_mechagents_2024,ghafarollahi_protagents_2024,ghafarollahi_sparks_2025,ghafarollahi_sciagents_2025,lu_ai_scientist_2024,lu_ai_scientist_v2_2025,agarwal_autodiscovery_2025,wang_scienceclaw_infinite_2026,buehler_break_world_2026,buehler_melm_2023,buehler_preflexor_2025}. The trajectory is clear and suggests that AI is no longer only a way to predict an output for a fixed task, but an active participant in how scientific work is structured.

Yet the central question for these systems remains underformalized. Existing AI scientists are extraordinarily fluent at recombining, optimizing, and reformulating inside a fixed scientific vocabulary, but the operations that matter most in real science often change the vocabulary itself: a new effective variable, a new admissible operation, a new verifier, a new tool, a new artifact type. When is an agentic system searching within a fixed scientific regime, and when is it changing the regime itself? The answer is not merely philosophical; it determines how verifiers must be designed, how provenance must be audited, how progress should be measured, and why scaling a fixed model is qualitatively different from building a system that can construct new representational commitments.

\hlyellow{This distinction leads to the central technical problem of the paper: identifying when the current hypothesis space is no longer an adequate representation of the scientific problem. Probabilistic scoring can compare hypotheses within a specified representation, but such scores do not by themselves determine when new variables, artifact types, operations, verifiers, or evidentiary relations must be introduced. In the terminology of this paper, discovery begins at this representational boundary. We formalize that boundary by representing scientific state as typed commitments over a schema and representing discovery as a recorded regime transition $u:\Sschema_b\to\Sschema_{b'}$. Prior evidence is preserved and transported where possible; accepted and rejected commitments remain auditable; and the new representational content is identified by residual objects that are not generated by transport from the old schema.}

\hlyellow{The problem is connected to a longer account of scientific change. Popper emphasized critical tests and refutation; Kuhn emphasized changes of paradigm and world view; Lakatos described scientific progress through research programmes whose hard cores and auxiliary hypotheses evolve under pressure from anomalies \citep{popper_logic_1959,kuhn_structure_1962,lakatos_falsification_1970}. We translate this philosophical distinction into a design requirement for artificial discovery systems: the system must record, verify, and reuse the moment when evidence forces a change in the representational regime.}

\hlyellow{This requirement sits at the intersection of several traditions that have usually been developed separately. Scientific knowledge graphs and scholarly graph infrastructures organize entities, relations, and claims \citep{jaradeh2019orkg,doi:10.1021/acsengchemau.3c00053,stewart2026graphagentsknowledgegraphguidedagentic}; provenance models and scientific-workflow systems record how data products are generated, transformed, and reused \citep{provo2013,davidson2008provenance,bechhofer2013linkeddata}; and categorical databases and applied category theory provide a language for schemas, typed data, and functorial migration between representations \citep{spivak_schemas_2012,spivak_categories_scientists_2014,fong_spivak_seven_2019}. In mechanics and materials science, language-model and multi-agent systems have begun to connect knowledge across scales and support exploratory scientific reasoning \citep{buehler_melm_2023,buehler_mechgpt_2024,ni_mechagents_2024,ghafarollahi_protagents_2024,ghafarollahi_atomagents_2025,ghafarollahi_sciagents_2025,buehler_preflexor_2025}.}

\hlyellow{Our contribution is to bring these strands into a single account of scientific state. A discovery system is treated not only as a graph of artifacts, a workflow trace, or an agentic controller, but as a typed record of commitments: what objects currently exist, how they were produced, which alternatives were rejected, which gates accepted them, and when the vocabulary used to judge evidence has changed. CategoryScienceClaw implements this view as a proof-carrying provenance layer in which agents, artifacts, rejected alternatives, gates, stress tests, public claims, and regime changes inhabit the same typed discovery trace.}

\hlyellow{A concrete mechanics workflow makes this requirement tangible. Consider a researcher studying the mechanical response of a protein. They begin with a sequence, retrieve or predict a Protein Data Bank (PDB)-style structure, construct a contact graph, diagonalize an elastic network, compare predicted fluctuations with crystallographic B-factors, formulate a hypothesis about which residues dominate the response, select another protein whose behavior should stress the hypothesis, and revise the model. The objects at every stage are typed: sequence, structure, contact graph, mode amplitude, feature, symbolic model, score, report. The operations are typed as well: build a contact graph from a structure, diagonalize a Kirchhoff matrix, extract a normal mode, fit a symbolic expression, score a description-length budget. The record of the work is therefore not a string of answers but a typed provenance graph whose scientific meaning depends on the representation in which those objects and operations are admissible.} Table~\ref{tab:dictionary} \hlyellow{summarizes the implementation-to-categorical dictionary used throughout the paper, and Supplementary Table~S1 provides a broader plain-language terminology guide covering mechanics, statistics, scientific-machine-learning, materials-science, provenance, and evidence-scope terms.}

\begin{table}[ht]
\centering
\footnotesize
\setlength{\tabcolsep}{4pt}
\begin{tabular}{@{}>{\raggedright\arraybackslash}p{0.22\linewidth}>{\raggedright\arraybackslash}p{0.33\linewidth}>{\raggedright\arraybackslash}p{0.35\linewidth}@{}}
\toprule
Implementation object & Categorical object & Role in discovery system \\
\midrule
artifact type & object of schema category $\Sschema_b$ & declares what kind of thing may be produced \\
tool or skill signature & morphism of $\Sschema_b$ & declares allowed transformation between types \\
current artifact population & copresheaf $I_t:\Sschema_b\to\Set$ & stores artifacts inhabiting each type \\
artifact directed acyclic graph (DAG) or hypergraph & category of elements $\int I_t$, or a multicategorical provenance analogue& realized provenance graph \\
accepted update & natural/refinement morphism & preserves prior typed provenance \\
gate or verifier & predicate or scoring functional & decides commitment, rejection, or supersession \\
new schema/tool/verifier & regime extension $u:\Sschema_b\to\Sschema_{b'}$ & changes the admissible scientific vocabulary \\
\bottomrule
\end{tabular}
\caption{Dictionary between implementation terms and the categorical formalism used in this paper.}
\label{tab:dictionary}
\end{table}

Now suppose the next protein exposes a failure that cannot be repaired by changing a coefficient or adding another threshold. A local elastic-network feature may fit compact proteins but fail on hinge/domain proteins because the relevant phenomenon is no longer only local residue compliance; it is compliance expressed through a collective deformation. The researcher has two options. They can search within the current vocabulary, adjusting terms already available. Or they can enlarge the vocabulary by introducing a new effective type, operation, or verifier. The first move is search. The second is discovery in the strong sense used here: not only a better point in an existing space, but a change in the space of admissible scientific artifacts.

Category theory is useful here because it gives names to the engineering structure scientists already use. A schema is a category of artifact types and allowed operations. A current body of evidence is a population of artifacts over that schema. A provenance graph is the realized category of elements of that population. A consistent update is a natural transformation or functorial refinement. A discovery move is a transport from one schema to a larger or different schema, preserving what was valid while making new types, morphisms, tools, or verifiers available. The categorical foundations are standard; the applied use of categories for scientific schemas, ologs, and data migration is already well developed \citep{maclane_categories_1971,awodey_category_2010,spivak_categories_scientists_2014,fong_spivak_seven_2019,spivak_schemas_2012,spivak2020poly,spivak2021learners}. The language is abstract, but the objects are practical: PDB chains, simulations, equations, hypotheses, artifacts, claims, and reports.

\hlyellow{In this usage, the categorical terms do not introduce a second description detached from scientific practice; they formalize the dependence of evidence on the regime in which it is expressed. The schema specifies the presently admissible objects and transformations, and the artifact population over that schema records what the system is prepared to treat as evidence. The category of elements then identifies the realized portion of that typed evidence as an actual provenance structure. A fixed schema supports refinement and search within an established regime. A change of schema has a different status: prior artifacts must be transported across the change, and whatever cannot be accounted for by that transport marks the representational content newly made available. The formalism is therefore used to state, in auditable terms, how scientific meaning is preserved through a revision of the vocabulary that supports it.}

This paper also continues a materials-science arc developed in our earlier work on ologs, hierarchical materials, learned maps, neural ologs, language-mediated reasoning, and plannerless scientific swarms \citep{buehler_spivak_2011,giesa_spivak_2011_patterns,giesa_spivak_2012_buildingblock,buehler_fieldperceiver_2022,buehler_atoms_to_swarms_2026}. In each case the scientific problem is not only to compute an output, but to preserve the structure that makes the output meaningful across scales. The present work turns that arc back onto discovery itself. If a material is a hierarchy of composable mechanisms, an agentic discovery system is a hierarchy of composable scientific artifacts.

We note that the underlying claim is not ordinary reductionism. In hierarchical materials, complexity is rarely located at a single privileged scale. Hydroxyapatite chemistry, collagen structure, mineral organization, crack-tip mechanics, and tissue remodeling are each necessary, but none is sufficient alone. What matters is the compositional grammar that organizes simple components into higher-order structure, and the responsiveness by which that structure updates under load, damage, growth, or new evidence. This view has predecessors in older scientific accounts of form and transformation, including Goethe's morphology \citep{goethe_metamorphosis_1790}. Category theory is useful because it describes the morphisms among scales without forcing the explanation to live only at the bottom. The same principle motivates the present account of discovery: a scientific AI system should not only optimize artifacts inside a fixed representation, but should compose typed artifacts across representational levels, test those compositions against the world, and revise the grammar when the old one is too small.

This lineage also reverses the usual direction of influence between artificial intelligence and mechanics.
In many current settings, AI is introduced into mechanics as an external optimizer or surrogate predictor: it accelerates
simulation, fits constitutive laws, or searches a design space. The present framework instead lets mechanics
help define what an AI discovery system should be. Mechanics supplies a disciplined language for state,
load, response, instability, failure, admissible motion, constitutive closure, and multiscale transfer. These
ideas reappear here as artifact states, evidence pressure, stress tests, gates, regime transitions, residual
content, and provenance-preserving transport. In this sense, mechanics is not only a domain on which the
framework is demonstrated but instead is one of the sources of the framework. The same habits that make a
mechanical model meaningful (e.g., tracking boundary conditions, preserving invariants, testing failure modes,
coarse-graining across scales, and distinguishing a new constitutive structure from a refit of old variables~\cite{Cranford2010Materiomics:Macro,Lee2022AnDiscovery,Fish2021MesoscopicMaterials,Buehler2016IntegratedPerspective,Jackson2019RecentDesign,Anand2020ContinuumSolids,Shi2022HydrolysisAcid,Tang2009Constitutive}) 
become design principles for agentic AI systems that revise their own scientific vocabulary.

\hlyellow{The resulting contribution is a single audit framework for scientific discovery systems. Scientific state is modeled as typed artifacts over a schema: a copresheaf $I_t:\Sschema_b\to\Set$ at the committed layer, with realized provenance given by $\int_{\Sschema_b}I_t$ or by a multicategorical provenance object when artifacts have multiple parents. Search refines this population inside a fixed schema, whereas discovery is a verified regime transition with preserved old evidence, Kan-extension transport, and residual content not explained by transport. MDL/AIC-style gates, stress tests, supersessions, and rejected alternatives are therefore treated as first-class typed audit objects rather than external annotations. Builder/Breaker supplies a quantitative protein-mechanics trace in which a symbolic law is revised under an MDL gate \citep{buehler_break_world_2026}. CategoryScienceClaw implements the same logic as proof-carrying typed provenance over ScienceClaw, including deterministic materials inputs, accepted and rejected screening models, gate records, generated figures, reports, and claim artifacts.}
\hlyellow{The CategoryScienceClaw application is a high-entropy-alloy (HEA) fatigue-crack-growth investigation with direct materials-science content. A threshold-only $\Delta K_{th}$ ranking is rejected because FCC+BCC records in the open HEA fatigue database have higher median threshold than FCC records but also substantially steeper Paris slopes. The accepted representation is a two-axis damage-tolerance map that separates crack-growth threshold from post-threshold crack-growth stability and flags high-threshold/high-slope dual-phase records as a prospective validation class. A fiber-network model-comparison audit is included in the Supplementary Information as a supporting provenance example.}
\hlyellow{Formal definitions are gathered in the Materials and Methods section} (Section~\ref{sec:methods}); \hlyellow{the Results section develops their practical interpretation through these case studies.}

\begin{figure}[ht]
\centering
\begin{tikzpicture}[
  box/.style={draw,rounded corners=2pt,fill=blue!5,minimum width=2.7cm,minimum height=0.68cm,align=center,font=\scriptsize},
  sbox/.style={draw,rounded corners=2pt,fill=orange!10,minimum width=2.7cm,minimum height=0.68cm,align=center,font=\scriptsize},
  dbox/.style={draw,rounded corners=2pt,fill=green!10,minimum width=2.7cm,minimum height=0.68cm,align=center,font=\scriptsize},
  arr/.style={-{Stealth[length=2mm]},line width=0.55pt}
]
\node[font=\bfseries\small] at (-1.3,1.15) {(A)};
\node[box] (r1) at (0,0) {existing corpus};
\node[box] (r2) at (0,-1.0) {typed artifact};
\draw[arr] (r1) -- node[right,font=\scriptsize]{retrieve} (r2);
\node[font=\bfseries\scriptsize] at (0,0.68) {retrieval};

\node[sbox] (s1) at (3.5,0) {fixed schema $\Sschema_b$};
\node[sbox] (s2) at (3.5,-1.0) {new path in $\int I_t$};
\draw[arr] (s1) -- node[right,font=\scriptsize]{search} (s2);
\node[font=\bfseries\scriptsize] at (3.5,0.68) {search};

\node[dbox] (d1) at (7.0,0) {regime $b$};
\node[dbox] (d2) at (7.0,-1.0) {regime $b'$};
\draw[arr] (d1) -- node[right,font=\scriptsize]{enlarge} (d2);
\node[font=\bfseries\scriptsize] at (7.0,0.68) {discovery};
\node[font=\scriptsize,align=center] at (0,-1.7) {same vocabulary};
\node[font=\scriptsize,align=center] at (3.5,-1.7) {new composition\\inside same vocabulary};
\node[font=\scriptsize,align=center] at (7.0,-1.7) {new types, morphisms,\\tools, or verifiers};
\end{tikzpicture}

\vspace{0.6em}

\begin{tikzpicture}[
  type/.style={draw,rounded corners=2pt,fill=blue!7,minimum width=2.6cm,minimum height=0.65cm,align=center,font=\scriptsize},
  art/.style={draw,rounded corners=2pt,fill=green!8,minimum width=2.4cm,minimum height=0.55cm,align=center,font=\scriptsize},
  arr/.style={-{Stealth[length=2mm]},line width=0.55pt}
]
\node[font=\bfseries\small] at (-1.3,1.05) {(B)};
\node[font=\bfseries\small] at (1.7,1.05) {schema $\Sschema_b$};
\node[type] (pdb) at (0,0) {\texttt{PDBChain}};
\node[type] (cg) at (3.4,0) {\texttt{ContactGraph}};
\node[type] (mdl) at (6.8,0) {\texttt{SymbolicDAG}};
\draw[arr] (pdb) -- node[above,font=\scriptsize]{build} (cg);
\draw[arr] (cg) -- node[above,font=\scriptsize]{fit} (mdl);

\node[font=\bfseries\small] at (1.7,-1.25) {copresheaf values};
\node[art] (p1) at (0,-2.05) {4AKE, 1AKE};
\node[art] (c1) at (3.4,-2.05) {contact graphs};
\node[art] (m1) at (6.8,-2.05) {DAG models};
\draw[arr] (pdb) -- (p1);
\draw[arr] (cg) -- (c1);
\draw[arr] (mdl) -- (m1);
\draw[arr] (p1) -- (c1);
\draw[arr] (c1) -- (m1);

\node[draw,rounded corners=2pt,fit=(p1)(m1),inner sep=0.18cm,label={[font=\scriptsize]below:$\int_{\Sschema_b} I_t$ is the realized provenance graph}] {};
\end{tikzpicture}
\caption{(A) Retrieval, search, and discovery are structurally different operations. Retrieval adds an already representable artifact. Search finds a new path or object inside a fixed schema. Discovery changes the regime in which artifacts and operations are typed. (B) A fixed regime has a schema category $\Sschema_b$ of types and operations. A copresheaf $I_t:\Sschema_b\to\Set$ assigns actual artifacts to each type. The category of elements $\int I_t$ is the realized typed artifact DAG. \hlyellow{The labels 4AKE and 1AKE denote adenylate kinase structures by PDB identifier.}}
\label{fig:retrieval_search_discovery}
\label{fig:copresheaf_elements}
\end{figure}

% =====================================================================
\section{Results and Discussion}
\label{sec:results}
% =====================================================================

\subsection{Agentic discovery systems are typed artifact systems}
\label{sec:typed_artifact_systems}

An agentic discovery system is best understood as a typed artifact system. Its persistent state is not a conversation transcript, a hidden vector, or a single model checkpoint. It is a growing record of artifacts and their provenance: data, simulations, models, hypotheses, code, measurements, reports, critiques, and decisions. Each artifact has a type, and each operation has a declared source and target type. Critically this typing is not bureaucratic overhead but what distinguishes a scientific claim from a fluent answer, addressing an important limitation of many probabilistic AI systems.

Five components recur across the systems considered here.
\begin{enumerate}[leftmargin=1.7em]
    \item \textbf{A schema of artifact types and operations.} Types include sequences, structures, contact graphs, trajectories, hypotheses, symbolic models, measurements, and reports. Operations include structure prediction, simulation, normal-mode extraction, retrieval, proof checking, dimensional analysis, fabrication, and scoring.
    \item \textbf{A population of artifacts over that schema.} The system stores actual sequences, structures, simulations, equations, figures, claims, and reports inhabiting the declared types.
    \item \textbf{A provenance graph.} Every accepted artifact records its parents and the operation that produced it. Composition of operations is the scientific lineage from raw input to claim.
    \item \textbf{A gate or verifier.} New artifacts are not automatically committed. They are accepted, rejected, superseded, or held for review by an explicit gate. MDL is one such gate; pressure scoring, schema overlap, peer review, and community feedback are others.
    \item \textbf{A regime-update mechanism.} When evidence cannot be represented in the current schema, the system must extend or revise the schema, grammar, verifier, or tool registry.
\end{enumerate}

The distinction among retrieval, search, and discovery is shown schematically in Fig.~\ref{fig:retrieval_search_discovery}A. Retrieval adds artifacts already expressible in the schema. Search explores combinations of existing artifacts and operations. Discovery changes the regime by adding or revising the types and operations under which future artifacts are judged.

This view accommodates both compact experimental systems and large distributed systems. In ProtAgents, the schema contains sequences, structures, force predictions, and protein-design hypotheses \citep{ghafarollahi_protagents_2024}. In Sparks and SciAgents, it contains research goals, generated hypotheses, executable plans, code outputs, and reports \citep{ghafarollahi_sparks_2025,ghafarollahi_sciagents_2025}. ScienceClaw $\times$ Infinite instantiates the same mathematics at a larger architectural scale \citep{wang_scienceclaw_infinite_2026}\footnote{Code at \url{https://github.com/lamm-mit/scienceclaw}, \url{https://github.com/lamm-mit/infinite}, and the CategoryScienceClaw mechanics branch at \url{https://github.com/lamm-mit/scienceclaw/tree/categoryscienceclaw-mechanics}.}. ScienceClaw supplies the execution substrate: an extensible registry of typed scientific skills, immutable artifacts with metadata and parent lineage, shared open needs, plannerless coordination, pressure scoring, and mutation of the active artifact graph. Infinite supplies the discourse substrate: structured posts, hypotheses, methods, findings, links among claims, votes, comments, reputation, and moderation. Together they make scientific work auditable across both computation and communication. In CategoryScienceClaw, that ScienceClaw substrate is lifted into typed categorical state: the schema contains typed skills, immutable artifacts with parent lineage, pressure-ranked open needs, workflow mutation records, public discourse objects, domain inputs, descriptors, candidate models, accepted and rejected alternatives, gates, stress tests, regime-transition records, figures, and reports. The objects differ, but the structural skeleton is the same.

\subsection{Artifact states are copresheaves}
\label{sec:copresheaf_results}

The clean mathematical object behind the typed artifact graph is a copresheaf (Definition~\ref{def:copresheaf}). Fix a discovery regime $b$ (Definition~\ref{def:regime}). The regime includes a schema category $\Sschema_b$: its objects are artifact types and its morphisms are allowed operations. A scientific state at time $t$ is a covariant Set-valued functor
\begin{equation}
I_t:\Sschema_b\longrightarrow \Set .
\label{eq:copresheaf_state}
\end{equation}
For each type $A$, the set $I_t(A)$ contains the artifacts of type $A$ currently available to the system. For each operation $f:A\to B$, the function $I_t(f):I_t(A)\to I_t(B)$ records how an artifact of type $A$ gives rise to an artifact of type $B$, when that operation has been realized. 

For a materials scientist, typical fibers are:
\[
I_t(\mathrm{PDBChain}),\qquad
I_t(\mathrm{ContactGraph}),\qquad
I_t(\mathrm{SymbolicDAG}),
\]
the current PDB chains, the contact graphs constructed from them, and the candidate or accepted symbolic models. The actual provenance DAG is recovered by the category of elements $\int_{\Sschema_b} I_t$ (Fig.~\ref{fig:copresheaf_elements}B). Its objects are pairs $(A,x)$, where $A$ is a type and $x\in I_t(A)$ is an artifact of that type. A morphism $(A,x)\to(B,y)$ is an operation $f:A\to B$ such that $I_t(f)(x)=y$. Thus the category of elements is not a metaphor for provenance. It is the typed provenance graph.

%\Lan_u(\Phi_b)
%\Lan_u(\delta_t)

Copresheaves are the right level of abstraction because they separate a regime from its current contents. The schema can remain fixed while the artifact population grows. Conversely, a discovery move can change the schema while preserving the old artifact population by transport into a new regime. This separation is exactly what is missing when all artifacts, operations, and updates are collapsed into one untyped graph.

This construction can be understood as a typed generalization of a scientific knowledge graph.
A conventional knowledge graph records entities and relations over a largely fixed ontology.
Here the fibers \(I_t(A)\) contain scientific artifacts; morphisms \(f:A\to B\) are executable
or audited operations; the category of elements records realized provenance; gates record
commitment and rejection; and regime transitions allow the vocabulary itself to change.

For a formal definition, fix a discovery regime
\[
b=(\Sschema_b,\Gamma_b,V_b,L_b).
\]
A categorical knowledge--computation graph at time \(t\) is the tuple
\[
\mathfrak K_t^b =
\bigl(
\Sschema_b,\Gamma_b,I_t,\mathsf{Prov}_t,V_b,L_b,\mathsf D_t,\pi_t
\bigr),
\]
where \(I_t:\Sschema_b\to\Set\) is the typed artifact state;
\(\mathsf{Prov}_t\) is the realized provenance object, equal to
\(\int_{\Sschema_b} I_t\) in the unary case and to the corresponding typed
multicategory or colored-operadic provenance object when artifacts have multiple
parents; \(V_b\) is the verifier or gate; \(L_b\) is the description-length or
model-selection functional when present; \(\mathsf D_t\) is an optional discourse
category of claims, posts, evidence bundles, objections, replications, and validation
signals; and \(\pi_t\) is the publication map carrying audited artifact paths or
hyperpaths into public claim objects when such a discourse layer is implemented.

A conventional scientific knowledge graph is recovered by fixing
\(\Sschema_b\), forgetting \(\Gamma_b,V_b,L_b,\mathsf D_t,\pi_t\), and viewing
\(\mathsf{Prov}_t\) only as an entity-relation graph. A workflow-provenance graph
is recovered by retaining production edges but not the full gate, rejection, discourse,
and regime-transition structure. Hence \(\mathfrak K_t^b\) is not simply a knowledge
graph with metadata but instead an executable, verifier-aware, provenance-preserving
scientific state. Fixed-regime search is evolution of \(\mathfrak K_t^b\) under
\(\Phi_b\); discovery is a verified transition
\[
\mathfrak K_t^b \longrightarrow \mathfrak K_{t+1}^{b'}
\]
with transported evidence \(\Lan_u I_t\) and residual content outside
\(\operatorname{im}(\bar\rho)\).

\begin{remark}[Yoneda reading]
\label{rem:yoneda}
For a type $A$, the representable copresheaf $\Sschema_b(A,-)$ encodes all operations that can be applied to a generic artifact of type $A$. An actual artifact $x\in I_t(A)$ induces, by the covariant Yoneda lemma, a natural transformation $\Sschema_b(A,-)\to I_t$: each operation out of $A$ is sent to the artifact obtained by applying that operation to $x$. In implementation terms, an artifact is known by the typed operations it can participate in and the downstream artifacts those operations produce.
\end{remark}

\subsection{Fixed-regime updates are endofunctorial under explicit assumptions}
\label{sec:endofunctor_results}

Inside a fixed regime $b$, an agentic system updates artifact states. Write $[\Sschema_b,\Set]$ for the functor category of copresheaves on $\Sschema_b$. Abstractly, a fixed-regime update is an operation on copresheaves,
\begin{equation}
\Phi_b:[\Sschema_b,\Set]\longrightarrow [\Sschema_b,\Set].
\label{eq:fixed_update}
\end{equation}
It reads the current artifact population, selects compatible operations, proposes new artifacts, applies a gate, and returns the next artifact population. The statement that $\Phi_b$ is an endofunctor is a further preservation claim: after the category of admissible refinements between artifact states is specified, $\Phi_b$ must extend from an object-level update to a map on refinement morphisms in the corresponding category of knowledge states $\K_b$ (Definition~\ref{def:fixedupdate}). For the structural observations and the Kan proposition, $\K_b$ is the subcategory of $[\Sschema_b,\Set]$ in which morphisms are componentwise injective natural transformations (Methods, Section~\ref{sec:methods}); refinements may add, annotate, or supersede artifacts, but may not silently identify two previously distinct accepted artifacts.

This qualification matters. A raw program that takes a JSON artifact ledger to another JSON artifact ledger is only an endomap. It becomes an endofunctor only when it also preserves refinement morphisms: if one artifact state extends another by adding verified artifacts without overwriting prior provenance, the updated state should extend the updated predecessor in the same way. This is the formal version of a familiar engineering requirement: if a pipeline is refactored, old valid workflows must still compose.

In code, this is an audit contract rather than a slogan. The implementation must maintain stable artifact identifiers, typed tool or skill signatures, explicit parent lineage, append-only or explicit supersession semantics, status records for failed or retried calls, and no silent merge or deletion of accepted artifacts. When these conditions hold at the committed-state layer, a refinement $\alpha:I\to J$ can be pushed through the update to give a refinement $\Phi_b(\alpha):\Phi_b(I)\to\Phi_b(J)$. When they fail, the system may still be useful, but the endofunctor model applies only after adding the missing audit structure.

\hlyellow{The update may be implemented by agents, simulators, retrieval systems, symbolic search, numerical solvers, or human feedback. The categorical requirement is not that the implementation be deterministic, exact, or neural-free, but that the committed state preserve typed provenance. Numerical and statistical artifacts are interpreted relative to the verifier or gate of the regime, which declares the relevant tolerance, equivalence, or model-selection criterion. In stochastic, asynchronous, partial, or approximate implementations, the same condition is imposed on the committed-state relation, stochastic kernel, partial map, or Kleisli morphism rather than on every raw execution trace \citep{moggi_monads_1991}. Multi-parent scientific operations are represented by the multicategorical variants discussed below.}

\hlyellow{The Set-valued copresheaf should therefore be read as the clean committed-state semantics after gates have accepted, rejected, or superseded proposed artifacts. The present implementation checks typed provenance, accepted/rejected gate records, deterministic mechanics computations, replay files, and post-hoc categorical audits; it does not claim total Set-valued functorial certification of each low-level proposal, retry, failure, or model call.}

This distinction is important in the examples below. Builder/Breaker is closest to the strict model at the layer of MDL-accepted symbolic DAGs and accumulated evidence, not at the raw proposal trace. CategoryScienceClaw is close at the mechanics-audit layer because typed skills, immutable artifacts, parent lineage, candidate models, accepted and rejected alternatives, gates, stress tests, regime transitions, reports, and public discourse are recorded explicitly; a fully endofunctorial implementation would further require checked schema signatures and explicit refinement maps between artifact states.

\subsection{Discovery is regime transition, not only iteration}

\label{sec:regime_enlargement_results}

Search is iteration of $\Phi_b$ inside a fixed regime. Discovery requires a transition to a new regime (Definition~\ref{def:transition}). Let
\begin{equation}
u:\Sschema_b\longrightarrow \Sschema_{b'}
\label{eq:schema_extension}
\end{equation}
be a schema map from the old regime to the new one. In the simplest case, $u$ is an inclusion that preserves old types and operations while adding new ones. In more realistic cases, it may also refine old types, split a type into subtypes, add a verifier, or add new morphisms between old objects. Therefore the general notion should not be restricted to a fully faithful embedding of categories; discovery may add new admissible relations among old objects.

The transition is summarized in Fig.~\ref{fig:kan_extension}A. The old artifact state is transported into the new schema by left Kan extension:
\begin{equation}
\Lan_u I_t:\Sschema_{b'}\longrightarrow \Set .
\label{eq:lan_transport}
\end{equation}
For an object $A'$ of the new schema, the value $(\Lan_u I_t)(A')$ is computed as a colimit over the old artifacts that map toward $A'$ (the precise formula appears as Eq.~\ref{eq:lan_formula} in Methods). Operationally, it is the least systematic way to reinterpret old artifacts inside the new vocabulary. If $A'$ receives no morphisms from the image of $u$, the comma category indexing the colimit is empty, so $(\Lan_u I_t)(A')=\varnothing$: free transport supplies nothing at that isolated new type. If $A'$ does receive a morphism from an old type, then old evidence can be transported to it, even if $A'$ itself is a new object. A discovery move is therefore not merely $\Lan_u I_t$; it is transport plus a verified post-transition state containing new evidence, new artifacts, new verifier outcomes, or new grammar productions that are not accounted for by transport alone.

This gives a precise form to the slogan that discovery changes the world model. Yet importantly it does not mean the old world disappears. The old artifacts persist as transported evidence, and their old provenance must remain auditable. What changes is the regime in which the evidence can be represented and composed.

The Kan-extension picture also gives a quantitative reading of how much discovery has occurred, but only after one additional datum is specified. A verified transition includes a natural transformation
\[
\rho:I_t\longrightarrow u^* I'_{t+1},
\]
where $u^*$ restricts a new-regime state back to the old schema. This map says, explicitly, how each old accepted artifact is preserved in the new state. By the adjunction $\Lan_u\dashv u^*$, $\rho$ corresponds uniquely to a comparison map
\[
\bar\rho:\Lan_u I_t\longrightarrow I'_{t+1}.
\]
The image of $\bar\rho$ is the transported-evidence substate. Artifacts outside this image, object by object, are the empirical or representational content the system had to acquire beyond functorial reinterpretation of old evidence. When the new regime carries a relative description-length functional $L_{b'}(-\mid-)$, the \textbf{discovery cost} is the bit budget $L_{b'}(I'_{t+1}\mid\mathrm{im}(\bar\rho))$ required to specify the post-transition state given transported evidence. The Builder/Breaker model in Section~\ref{sec:builder_breaker_results} uses the MDL gate to measure this kind of cost in a concrete protein-mechanics regime.

\subsection{The Builder/Breaker model gives a quantitative MDL case}
\label{sec:builder_breaker_results}

The Builder/Breaker protein-mechanics model is a compact empirical instance of the framework, building on preliminary work reported in \citep{buehler_break_world_2026} (Definition~\ref{def:bb}\footnote{Code at \url{https://github.com/lamm-mit/BreakingTheWorld}.}). The Breaker chooses new proteins intended to expose failure modes of the current symbolic model. The Builder proposes symbolic DAG edits. The gate accepts a candidate only if it reduces total description length after paired refitting on the same accumulated evidence:
\begin{equation}
L(M,D)=L_{\mathrm{model}}(M)+L_{\mathrm{data}}(D\mid M).
\label{eq:case_mdl_results}
\end{equation}
Here $M$ is the symbolic DAG world model and $D$ is the accumulated protein-mechanics evidence. If the Breaker adds stress-test evidence $E$, a proposed revision $M'$ becomes part of the world model only when
\[
L(M',D\cup E)<L(M,D\cup E),
\]
after both models have been judged on the same evidence. This is the formal sense in which a productive failure becomes scientific structure: the new symbolic law must explain the counterexamples well enough to pay for its additional bits.
Consequently, the acceptance criterion in this case is not monotone improvement of a single
predictive score across iterations. Each outer iteration changes the accumulated evidence set
on which the current and proposed symbolic DAGs are compared. The relevant paired test is
therefore whether the revised model compresses the enlarged evidence set better than the
previous model after both are refit on that same evidence, not whether the reported $R^2$
forms a monotone sequence across heterogeneous stages.

\begin{figure}[t]
\centering
{\small(A)}\quad
\begin{tikzcd}[row sep=large,column sep=large]
I_t \arrow[r, "\delta_t"] \arrow[d, "\Lan_u"']& I_{t+1} \arrow[d, "\Lan_u"] \\
\Lan_u I_t \arrow[r, dashed, "\Lan_u(\delta_t)"'] & \Lan_u I_{t+1} \arrow[r, "\bar\rho_{t+1}"] & I'_{t+1}
\end{tikzcd}

\vspace{0.8em}

{\small(B)}\quad
\resizebox{0.92\linewidth}{!}{%
\begin{tikzpicture}[
  font=\scriptsize,
  panel/.style={anchor=west,font=\bfseries\sffamily\scriptsize},
  shared/.style={draw,rounded corners=2pt,fill=blue!10,minimum width=2.75cm,minimum height=0.76cm,align=center,font=\scriptsize},
  unary/.style={draw,rounded corners=2pt,fill=green!12,minimum width=3.0cm,minimum height=0.76cm,align=center,font=\scriptsize},
  comp/.style={draw,rounded corners=2pt,fill=orange!16,minimum width=3.25cm,minimum height=0.82cm,align=center,font=\scriptsize},
  target/.style={draw,rounded corners=2pt,fill=gray!10,minimum width=2.65cm,minimum height=0.82cm,align=center,font=\scriptsize},
  note/.style={font=\scriptsize,align=center,text width=3.2cm},
  arr/.style={-{Stealth[length=2mm]},line width=0.55pt}
]

\node[panel] at (-6.6,0.55) {Final transition $S_2\to S_3$: mode-conditioned compliance};

\node[shared] (c) at (-5.2,-0.35) {\texttt{Compliance}};
\node[shared] (m) at (-5.2,-1.55) {\texttt{NormModeAmpl}};
\node[unary] (lc) at (-1.8,-0.35) {\texttt{LogNorm}\\\texttt{Compliance}};
\node[unary] (rm) at (-1.8,-1.55) {\texttt{ReLUModeAmpl}};
\node[comp] (mc) at (1.9,-0.95) {\texttt{ModeConditioned}\\\texttt{Compliance}};
\node[target] (bf) at (5.25,-0.95) {accepted\\B-factor law};

\draw[arr] (c) -- (lc);
\draw[arr] (m) -- (rm);
\draw[arr] (lc.east) -- (mc.west);
\draw[arr] (rm.east) -- (mc.west);
\draw[arr] (mc) -- (bf);

\node[note] at (-5.2,-2.65) {shared old fibers\\$691$ residues};
\node[note] at (-1.8,-2.65) {generator-reachable transforms\\$480$ new residues};
\node[note,text width=3.6cm] at (1.9,-2.65) {composite-reachable interaction\\new type and product};

\end{tikzpicture}%
}
\caption{(A) Fixed-regime operation is the update $\Phi_b$ inside a schema $\Sschema_b$. A committed fixed-regime step is represented by a refinement $\delta_t:I_t\to I_{t+1}$ associated with the object-level update $I_{t+1}=\Phi_b(I_t)$. The lower dashed arrow is $\mathrm{Lan}_u(\delta_t):\mathrm{Lan}_u I_t\to \mathrm{Lan}_u I_{t+1}$, not an independent new-regime dynamics; the left square commutes by functoriality of $\mathrm{Lan}_u$ on refinement morphisms. Discovery enters through the comparison map $\bar\rho_{t+1}$ from transported evidence to the verified post-transition state (Definition~\ref{def:transition}); in general $\Lan_u I_{t+1}\neq I'_{t+1}$, and the comparison map together with the residual artifacts outside its image record what the discovery move added beyond functorial transport. (B) Kan-transport audit of the Builder/Breaker protein-mechanics run, instantiating this general picture for the final transition $S_2\to S_3$: log-compliance and shifted ReLU mode participation are generator-reachable transformations of old physics-derived quantities, whereas \texttt{ModeConditionedCompliance} is reachable only after the new regime admits a product operation. This is the categorical form of the mechanics insight in Eq.~\ref{eq:bfactor_spectral_form}.}
\label{fig:kan_extension}
\label{fig:kan_audit_builder_breaker}
\end{figure}

The relevant fixed schema contains PDB chains, C$\alpha$ coordinates, contact graphs, Kirchhoff matrices, \hlyellow{Gaussian Network Model (GNM) spectra}, compliance observables, slow-mode observables, feature terms, symbolic DAGs, B-factor targets, and MDL budgets. The physical base is the elastic-network and Gaussian Network Model tradition, where low-resolution harmonic contact networks explain slow collective motions and residue-level temperature factors from structure alone \citep{tirion_elastic_1996,bahar_gnm_1997,haliloglu_gaussian_1997}. For a protein chain $p$ with residues $i=1,\ldots,N_p$ and C$\alpha$ coordinates $\mathbf r_{pi}\in\mathbb R^3$, define the contact graph by
\[
A^{(p)}_{ij}
=\mathbf 1\{i\neq j,\ \|\mathbf r_{pi}-\mathbf r_{pj}\|<r_c\},
\qquad r_c=10.0~\text{\AA}.
\]
The GNM Kirchhoff matrix is
\[
\Gamma^{(p)}_{ij}=
\begin{cases}
-A^{(p)}_{ij}, & i\neq j,\\
\sum_{k\neq i} A^{(p)}_{ik}, & i=j .
\end{cases}
\]
Diagonalize
\[
\Gamma_p\mathbf u_{pk}=\lambda_{pk}\mathbf u_{pk},
\qquad
0=\lambda_{p1}\leq\lambda_{p2}\leq\cdots\leq\lambda_{pN_p}.
\]
The all-mode compliance of residue $i$ is the diagonal of the pseudoinverse,
\[
C_{pi}
=(\Gamma_p^+)_{ii}
=\sum_{\lambda_{pk}>0}\frac{u_{pik}^2}{\lambda_{pk}}.
\]
For connected contact graphs this is the familiar sum over $k=2,\ldots,N_p$; the positive-eigenvalue form is the correct pseudoinverse expression if a cutoff or chain segmentation produces additional zero modes.
This is not an arbitrary learned feature but instead the harmonic-network compliance implied by the residue contact topology. In the standard GNM relation,
\[
\langle \Delta R_{pi}^2\rangle
=\frac{3k_B T}{\gamma}\,C_{pi},
\qquad
B_{pi}^{\mathrm{GNM}}
=\frac{8\pi^2}{3}\langle \Delta R_{pi}^2\rangle,
\]
where $\gamma$ is the effective spring constant. Because the learning target is normalized within each chain, the global constants $T$, $\gamma$, and $8\pi^2/3$ drop out. The experimental target is the per-chain normalized C$\alpha$ crystallographic B-factor,
\[
B^{(z)}_{pi}=\frac{B_{pi}-\overline{B}_p}{s_{B,p}},
\]
so the model explains within-chain flexibility patterns rather than absolute crystallographic scale. The physically important caveat is that crystallographic B-factors include thermal motion, static disorder, refinement effects, and crystal-packing effects; the discovery task is therefore to find a compact structural proxy for the experimental pattern, not a complete molecular dynamics law.

Let $z_p(x_i)=(x_i-\overline{x}_p)/s_{x,p}$ denote per-chain normalization. The log-compliance feature and slowest collective-mode participation are
\[
\phi_{pi}=z_p\!\left(\log(C_{pi}+\epsilon)\right),
\qquad
\psi_{pi}=\left[z_p\!\left(|u_{pi2}|\right)+\theta\right]_+ ,
\]
where $[x]_+=\max(x,0)$, $\epsilon>0$ is a small numerical floor inside the logarithm, and $u_{p2}$ denotes the first nonzero GNM eigenmode, called \texttt{mode1\_abs\_z} in the run after taking absolute value and z-scoring. If a contact graph has more than one zero mode, $u_{p2}$ should be read as the first positive-eigenvalue mode. The shift $\theta$ is added before the ReLU, so it acts as a lower clip rather than a conventional threshold; the equivalent threshold form is $[z_p(|u_{pi2}|)-\tau]_+$ with $\tau=-\theta$, and the fitted $\theta=2.2678$ corresponds to clipping near the lowest observed value of $z_p(|u_{pi2}|)$ on the mixed validation slice, so $\psi_{pi}$ shifts first-mode participation onto a positive scale and zeroes only the residues that barely participate in the dominant collective deformation. The discovered symbolic law has the form
\begin{equation}
\boxed{
\widehat B^{(z)}_{pi}
=\alpha+\beta\,\phi_{pi}\psi_{pi}
}
\label{eq:bfactor_symbolic_form}
\end{equation}
or, expanded in the GNM spectrum,
\begin{equation}
\boxed{
\begin{aligned}
\widehat B^{(z)}_{pi}
&=\alpha+\beta\,
z_p\!\left(
\log\!\left(\sum_{\lambda_{pk}>0}\frac{u_{pik}^2}{\lambda_{pk}}+\epsilon\right)
\right)\\
&\quad{}\times
\left[z_p\!\left(|u_{pi2}|\right)+\theta\right]_+ .
\end{aligned}
}
\label{eq:bfactor_spectral_form}
\end{equation}
The fitted numerical values accepted by the MDL gate are $\alpha=-0.1332$, 
$\beta=0.2239$, and $\theta=2.2678$.
The important discovery event is not merely the use of a normal mode; normal modes are available from the start. Categorically, this is not the appearance of an isolated new object. It is the transition at which the schema admits a new multi-input morphism
\[
\texttt{LogNormCompliance}\times \texttt{ReLUModeAmpl}
\longrightarrow
\texttt{ModeConditionedCompliance},
\]
whose target is composite-reachable from old physics-derived quantities only after the new unary transforms and product operation are admitted. The audit below makes this composite reachability explicit. Mechanically, the resulting interaction supports a new explanatory role: local softness matters most when it is expressed along a global collective deformation, rather than acting as an independent additive cause.

For additional context the four outer iterations are summarized in Fig.~\ref{fig:case_dagmdl}. Iteration 0 fits a minimal local fluctuation model on compact proteins. Iteration 1 adds boundary and slow-mode structure for terminal flexibility and gains $9.0$ bits. Iteration 2 exposes a hinge/domain-motion regime, using open and closed adenylate kinase (PDB chains 4AKE and 1AKE) as the canonical conformational stress test, and reorganizes the DAG around collective-motion interpretation at $37.3$ bits gain. Iteration 3 searches inside the enlarged regime on a mixed validation slice and consolidates the model into the compact multiplicative law in Eq.~\ref{eq:bfactor_spectral_form}, at $54.3$ bits gain. The point of the trajectory is not that successive regressions improved monotonically; it is that the admissible symbolic vocabulary was repeatedly stress-tested, revised, and compressed until the surviving law expressed a new mechanics relation.

Equation~\ref{eq:bfactor_spectral_form}, with the numerical values above, is the final accepted symbolic world model for the run. It has a clear mechanical reading. The factor $\phi_{pi}=z_p(\log(C_{pi}+\epsilon))$ is a compressed local-compliance coordinate: it is positive for residues whose contact-network environment predicts above-average fluctuation and negative for mechanically buried or strongly constrained residues. The factor $\psi_{pi}=[z_p(|u_{pi2}|)+2.2678]_+$ is a nonnegative participation weight for the slowest collective mode; the threshold is near the lowest observed value of $z_p(|u_{pi2}|)$ in the mixed validation stage, so the ReLU mostly shifts first-mode participation onto a positive scale and suppresses residues that barely participate in the dominant collective deformation. Their product says that experimental B-factor variation is best compressed by \emph{mode-conditioned compliance}: a residue has high predicted B-factor when it is locally soft in the GNM sense and participates strongly in the dominant collective motion. Local softness that is not aligned with the slow mode is down-weighted, and slow-mode participation without local compliance is insufficient.

The result is not a first-principles derivation from an atomistic Hamiltonian or a crystallographic refinement model, and it is not a neural-network predictor or unconstrained curve fit. The physics sits in the typed compositional pipeline
\[
\{\mathbf r_{pi}\}\longmapsto A_p\longmapsto \Gamma_p
\longmapsto \{(\lambda_{pk},\mathbf u_{pk})\}_{k}
\longmapsto (C_{pi},|u_{pi2}|)
\longmapsto \widehat B^{(z)}_{pi}.
\]
The discovery system searches over symbolic compositions of these physically meaningful artifacts and accepts a revised composition only when it compresses broader evidence better after paying for complexity. Thus the learned law is a mechanics-based constitutive surrogate: not ``B-factor equals an arbitrary fitted expression,'' but
\[
\text{within-chain B-factor pattern}
\sim
\text{all-mode elastic compliance}
\times
\text{slow collective-mode participation}.
\]
The log transform is also physically meaningful because raw GNM fluctuations are heavy-tailed; the model discovered that experimental B-factors are better described by a compressed fluctuation scale than by unbounded raw compliance. The key scientific claim is therefore structural: experimental protein flexibility is not governed by local elastic compliance alone, but by local compliance expressed through participation in the dominant collective mode of the contact-network spectrum.

\begin{table}[ht]
\centering
\footnotesize
\setlength{\tabcolsep}{3.5pt}
\begin{tabular}{>{\raggedright\arraybackslash}p{0.75cm}
                >{\raggedright\arraybackslash}p{1.4cm}
                >{\raggedright\arraybackslash}p{2.65cm}
                >{\raggedright\arraybackslash}p{2.45cm}
                >{\raggedright\arraybackslash}p{2.1cm}
                >{\raggedright\arraybackslash}p{1.0cm}
                >{\raggedright\arraybackslash}p{1.0cm}}
\toprule
Move & Break type & Generator-reachable new types & Composite-reachable new types & Retracted types & $\Delta L_M$ & MDL gain \\
\midrule
$0\to1$ & regime split & ReLU compliance; terminal exposure & boundary product & old linear parameters & $+39.1$ bits & $+9.0$ bits \\
$1\to2$ & ontology break & none beyond shared GNM base & none beyond shared GNM base & terminal and boundary feature family; old parameters & $-14.4$ bits & $+37.3$ bits \\
$2\to3$ & regime split & log-compliance; ReLU mode amplitude & mode-conditioned compliance & additive-model parameters & $-10.3$ bits & $+54.3$ bits \\
\bottomrule
\end{tabular}
\caption{Two-level Kan audit of the Builder/Breaker transitions.
Generator-reachable types receive an immediate unary morphism from old schema
objects. Composite-reachable types require new intermediate structure or a newly
admitted multi-input composition. Parameter objects are singletons in the audit;
they are listed only when they affect the interpretation of a transition. The
signed $\Delta L_{\rm model}$ records whether the symbolic model code grew or
shrank. MDL gains are acceptance gains from paired refitting on the same
accumulated evidence, not a direct numerical discovery cost. The retracted-types
column lists model commitments removed at each transition; retraction is recorded
as supersession, so the underlying evidence and its provenance remain available.
Read together with $\Delta L_{\rm model}$ and the MDL gain, the later accepted
transitions reduce the model code length (negative $\Delta L_{\rm model}$) while
still yielding the largest MDL gains, so the symbolic model becomes simpler even
as it compresses the broader evidence better.
}
\label{tab:kan_audit}
\end{table}

\begin{figure}[ht]
\centering
\includegraphics[width=\linewidth]{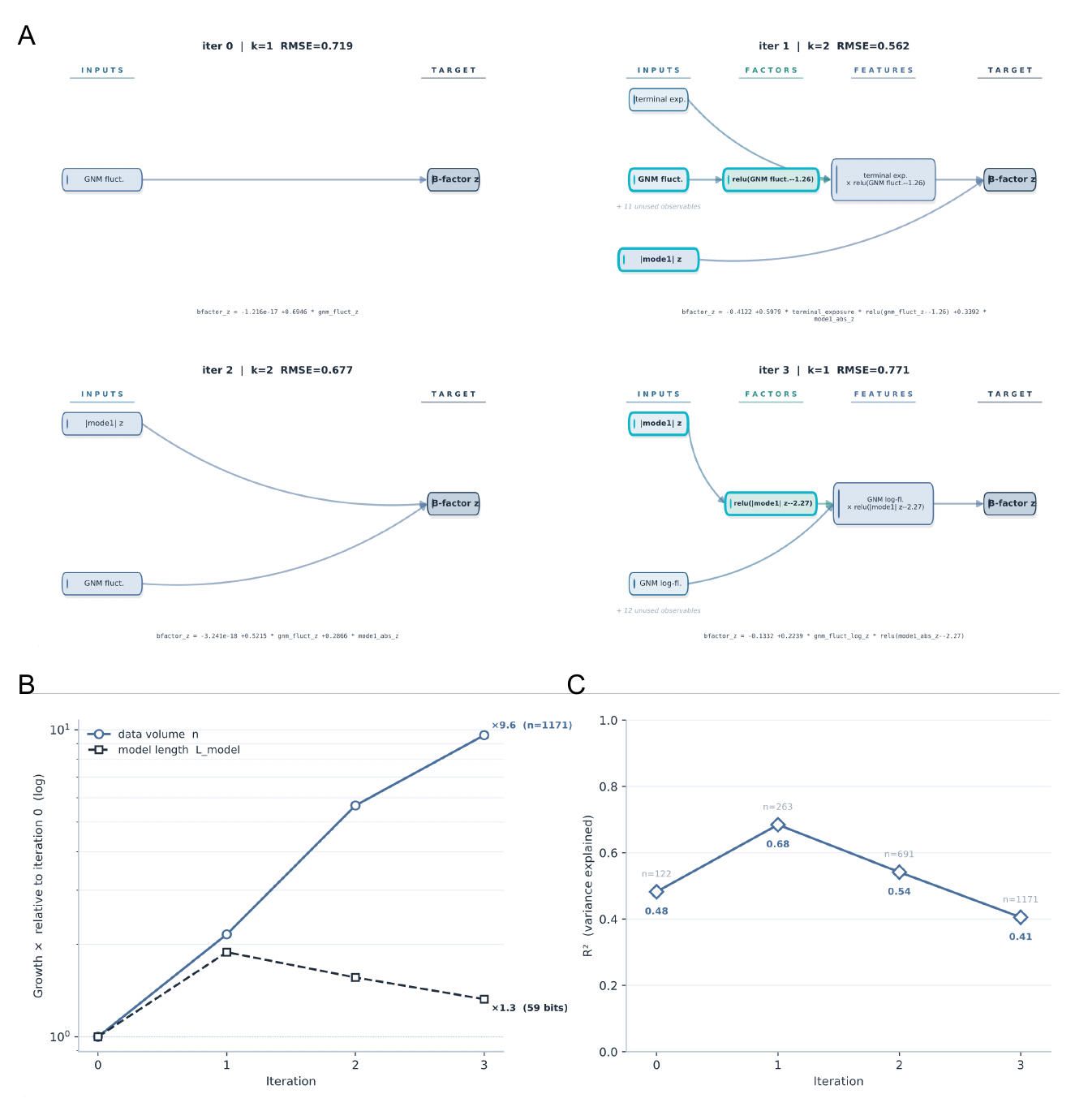}% 
\caption{
Parsimonious scaling of the discovered world model.
(A) Evolution of the world-model DAG across discovery iterations (0--3), read
left to right through four stages: inputs (observables), factors (nonlinear
transforms, e.g.\ thresholded ReLU terms), features (terms entering the linear
predictor), and the target ($B$-factor, $z$-scored). Nodes are colored by
stage, edges tinted by source, and nodes new to an iteration are outlined in
cyan; each panel lists the iteration, active-feature count $k$, and RMSE.
Complexity peaks at iteration~1 ($k=2$) and is pruned back to $k=1$ by
iteration~3 as newly revealed slices no longer justify the added term under MDL.
(B) Growth relative to iteration~0 (log axis): the data volume $n$ rises
$9.6\times$ ($122\rightarrow1171$ observations) while the model length
$L_{\mathrm{model}}$ rises only $1.3\times$ ($44\rightarrow59$~bits, $k\le2$)---an
order of magnitude more data absorbed without added complexity.
(C) Descriptive fit accuracy $R^2$ versus iteration, with the accumulated sample size $n$
annotated. These values should not be read as a monotone benchmark curve on a fixed test
set: each point is evaluated on the evidence available at that stage, and later stages include
harder stress-test proteins. The relevant trend is therefore joint parsimony under expanding
evidence: $n$ grows $\sim 10\times$ while $L_{\mathrm{model}}$ grows only $1.3\times$, and the
accepted symbolic law remains compact rather than being allowed to accumulate ad hoc
terms. Data based on earlier results \citep{buehler_break_world_2026}, with expanded analysis.
}
\label{fig:case_dagmdl}
\end{figure}

A post-hoc Kan-transport audit of the run makes the categorical content of this discovery explicit (Fig.~\ref{fig:kan_audit_builder_breaker}B and Table~\ref{tab:kan_audit}). The audit constructs a finite schema for each accepted outer iteration, transports the old accumulated artifact state into the next schema, and separates three cases. A new type is \emph{generator-reachable} when it receives an immediate unary generating morphism from an old type. A new type is \emph{composite-reachable} when it is not generator-reachable but becomes reachable through new intermediate types or multi-input operations admitted by the enlarged regime. A type is \emph{isolated} when it is unreachable even by such composites. This distinction matters for the final protein law. The features \texttt{LogNormCompliance} and \texttt{ReLUModeAmpl} are generator-reachable from old physics-derived quantities: they are new transformations of already available compliance and slow-mode amplitude. By contrast, \texttt{ModeConditionedCompliance} is composite-reachable only: it appears when the new regime admits the product operation
\[
\texttt{LogNormCompliance}\times \texttt{ReLUModeAmpl}
\longrightarrow
\texttt{ModeConditionedCompliance}.
\]
Thus the final discovery is not that GNM compliance or the slow collective mode exists; both are already present in the old schema. The new scientific commitment is the interaction type that lets local compliance be conditioned by participation in a collective deformation, and the MDL gate accepts this commitment with a $54.3$-bit gain on the accumulated evidence. The signed model-code changes reinforce the same interpretation: the first accepted transition increases model description length by $39.1$ bits, whereas the later accepted transitions reduce it by $14.4$ and $10.3$ bits. Discovery in this run is therefore not monotone accumulation of terms; it includes retraction and compression of the symbolic vocabulary.

The non-monotonic $R^2$ trajectory, $0.48 \to 0.68 \to 0.54 \to 0.41$, should therefore be
read as evidence-set widening rather than as a failed optimization trace. The four values are
not measurements of the same model class on a fixed benchmark. They are descriptive fits
on successively enlarged and more adversarial accumulated evidence sets, moving from
compact proteins to terminal-flexibility and hinge/domain-motion stress tests and finally to
the mixed validation slice. In such a setting, monotone $R^2$ would be the wrong success
criterion: it would reward adding terms to chase heterogeneous data. The MDL gate instead
asks whether a revised symbolic structure survives paired comparison on the same enlarged
evidence after paying its model-code cost. The final transition is therefore successful not
because it maximizes $R^2$, but because the multiplicative mode-conditioned-compliance
law remains the accepted compressed structure after additive and boundary terms fail to pay
for themselves.

\hlyellow{The inner symbolic search makes this selectivity visible (Fig.~}\ref{fig:case_hillclimb}\hlyellow{): in that iteration only 16 of 144 proposed DAG edits survive the MDL gate, and accepted moves include removals as well as additions. Tracking each feature slot across the run makes the churn explicit (Fig.~}\ref{fig:feature_lifecycle}\hlyellow{): the early iterations only accrete features, whereas iteration~3 explores four slots and retracts three, collapsing to the single surviving mode-conditioned law. Aggregated across all four iterations, the gate admits only 25 of 388 proposals (6.4\%), with operator-dependent selectivity: structure-recombining moves survive most often (seeds 21\%, swaps 11\%) while bare feature additions rarely do (3\%), and feature removals are a high-yield operator rather than rare cleanup (Fig.~}\ref{fig:gate_anatomy}\hlyellow{). The gate is therefore not decorative: it determines which proposed structures become committed scientific artifacts.}

\begin{figure}[h]
\centering
\includegraphics[width=.8\linewidth]{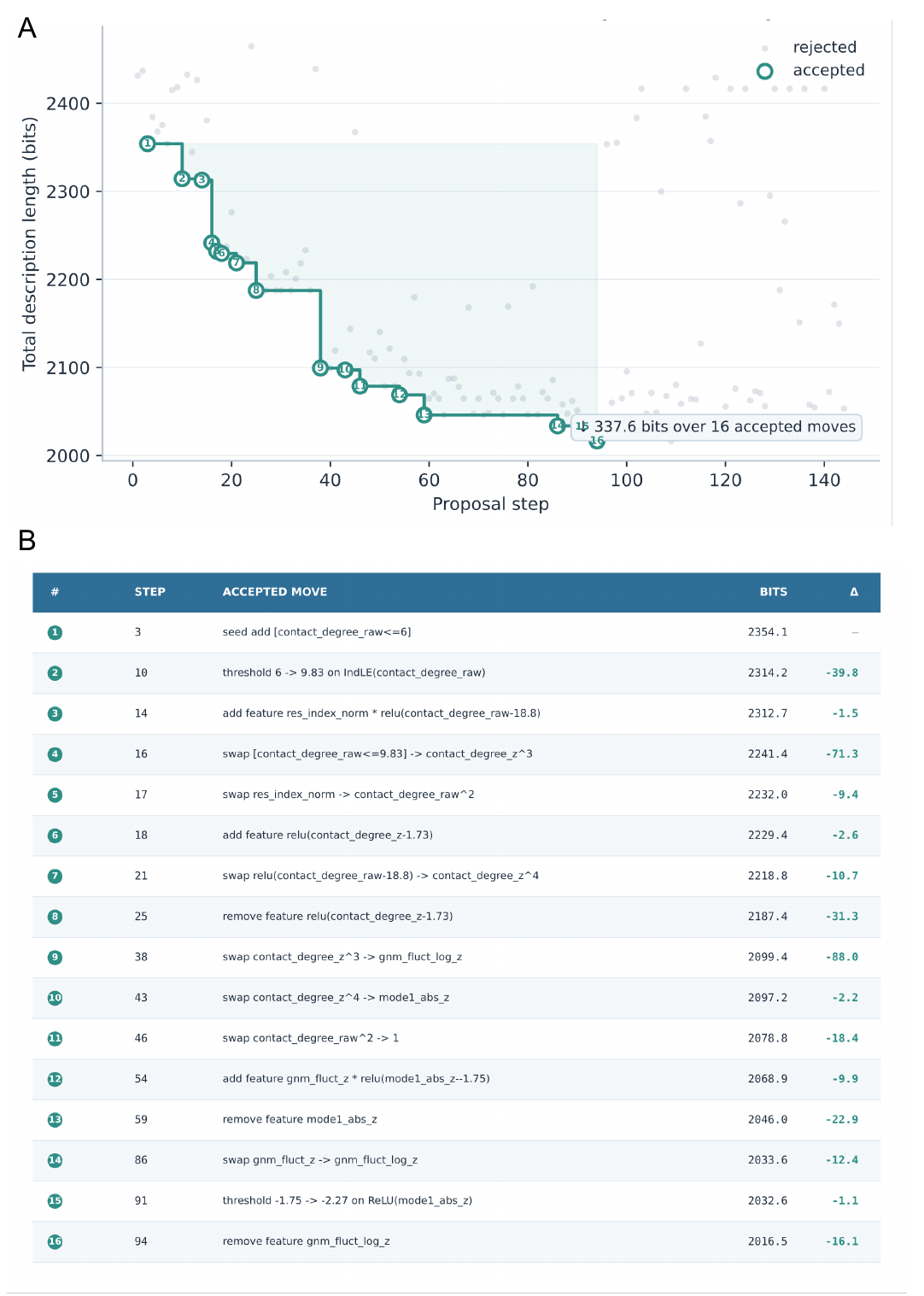}%
\caption{
Inner MDL-guided search within a discovery iteration (data from \citep{buehler_break_world_2026}).
(A) Hill-climb frontier: total description length (bits) versus proposal step.
Faint grey points are rejected proposals; the stepped teal curve is the
best-so-far frontier through the accepted moves (numbered markers), which
together reduce the description length by $337.6$ bits over $16$ accepted moves
(from $2354.1$ to $2016.5$ bits).
(B) Ledger of the accepted moves, keyed to the numbered markers in (A): each
row gives the proposal step, the structural edit (seed, add/remove a feature,
swap one feature for another, or adjust a threshold), the resulting total
description length in bits, and the change $\Delta$ relative to the previous
accepted state (negative $=$ improvement).
Together the panels show that the model is assembled through many small,
individually bit-reducing edits (adding, recombining, thresholding, and
pruning features) rather than a single large step.
}
\label{fig:case_hillclimb}
\end{figure}

\begin{figure}[ht]
\centering
\includegraphics[width=1.\linewidth]{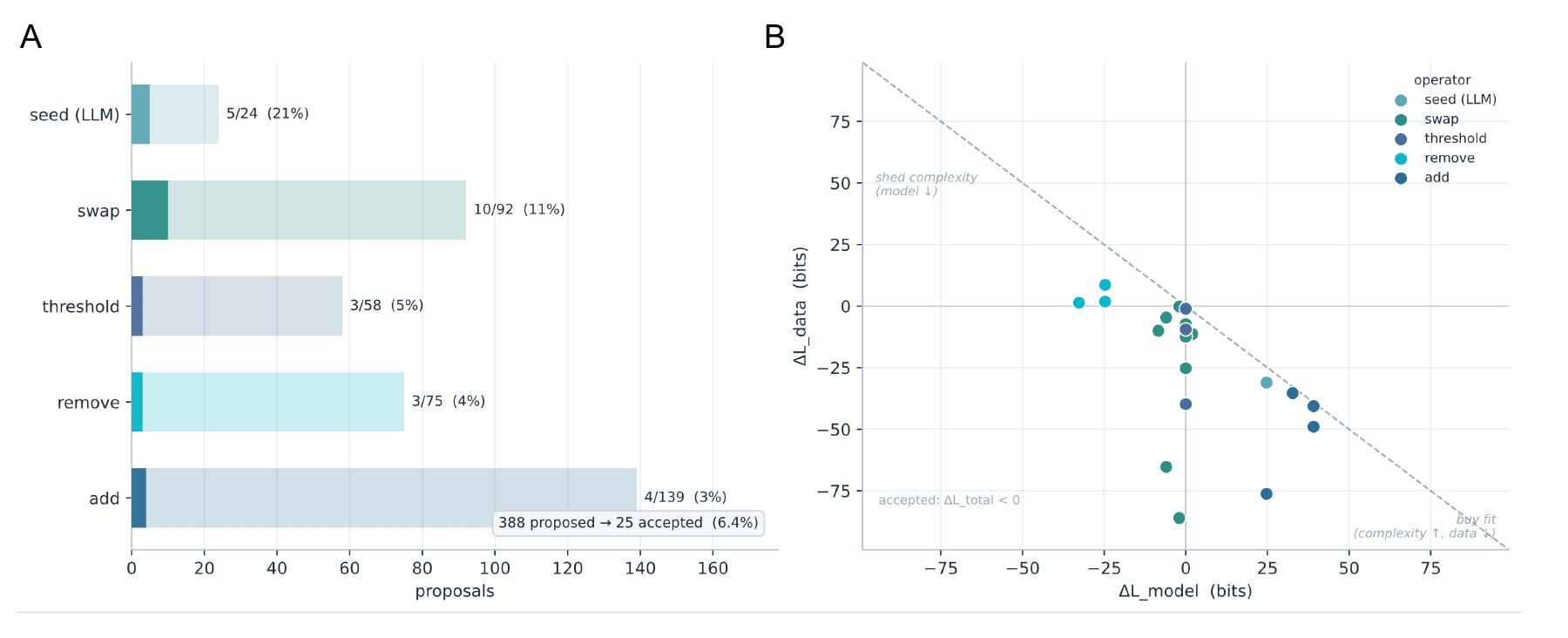}%
\caption{Anatomy of the MDL gate across the discovery run (data from \citep{buehler_break_world_2026}).
(A) Gate selectivity by proposal operator: the number of proposals accepted
versus proposed, aggregated over all iterations. Of $388$ proposals only $25$
are accepted ($6.4\%$), and the acceptance rate is strongly operator-dependent
(structure-recombining moves survive most often (seed $21\%$, swap $11\%$)
while bare feature additions rarely do (add $3\%$)). (B) The minimum-description-length
trade-off of each accepted move, shown as its change in model code length
$\Delta L_{\mathrm{model}}$ versus residual data code length $\Delta L_{\mathrm{data}}$;
deltas are computed within an iteration (between consecutive accepted states, where
the evidence set is fixed), giving $21$ of the $25$ accepted moves. All accepted
moves fall below the $\Delta L_{\mathrm{total}}=0$ frontier (dashed). Operators
occupy distinct regions: additions and seeds pay model bits to buy data bits
(lower right; mean $\Delta L_{\mathrm{model}}=+33.9$, $\Delta L_{\mathrm{data}}=-50.3$
for additions), threshold tuning improves fit at essentially zero model cost
(near-vertical), swaps give model-neutral fit gains (the most frequent accepted
move), and removals shed model complexity at a small data-fit cost (upper left;
mean $\Delta L_{\mathrm{model}}=-27.4$, $\Delta L_{\mathrm{data}}=+3.9$). Removals
are thus a high-yield operator rather than rare cleanup, showing that accepted
discovery includes retraction and compression, not only accumulation.
}
\label{fig:gate_anatomy}
\end{figure}

\begin{figure}[ht]
\centering
\includegraphics[width=\linewidth]{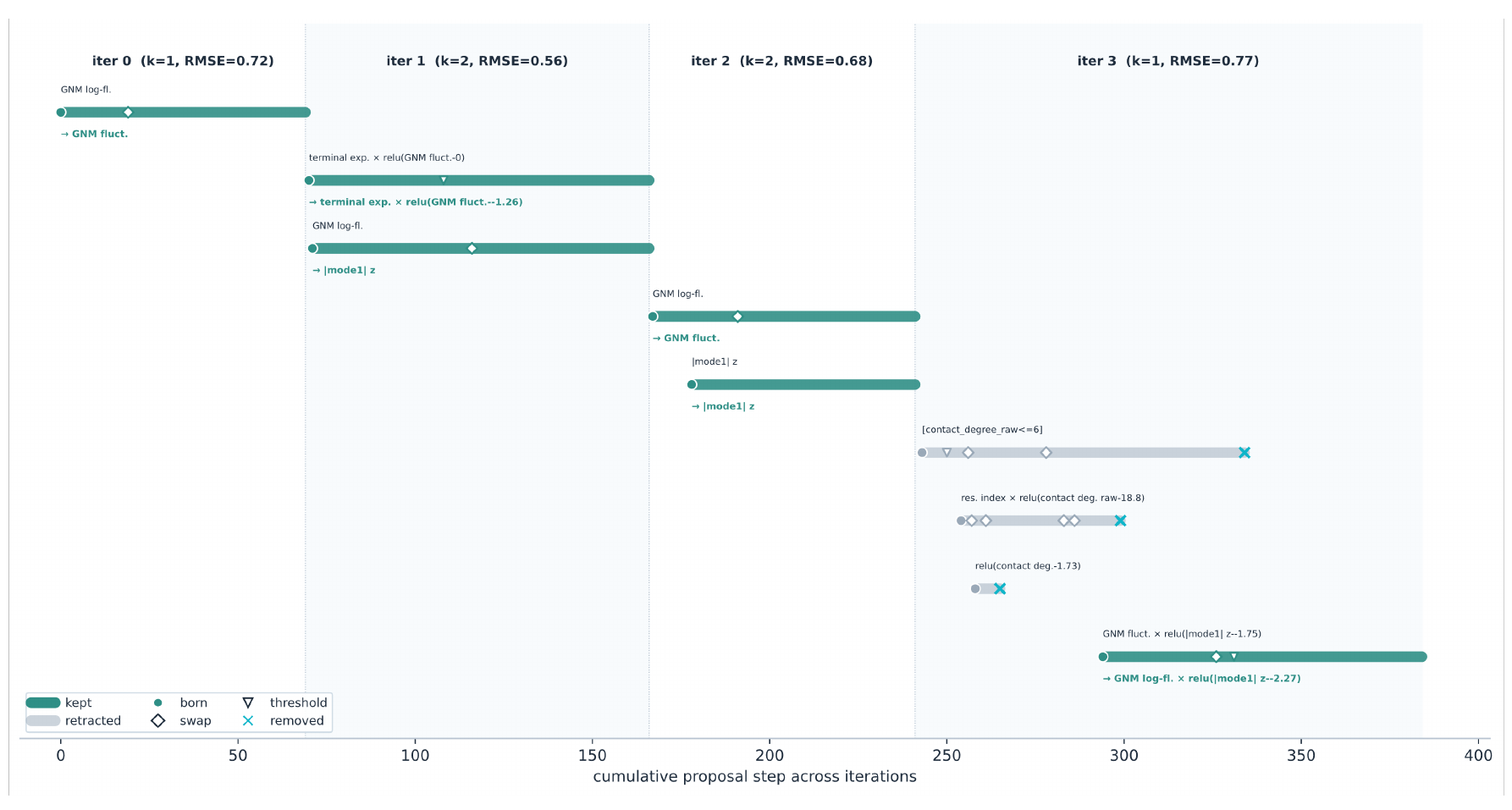}%
\caption{Feature lifecycle across the discovery run, computed from the
accepted moves of each iteration's inner search. Each horizontal bar is a
feature slot, with markers for its introduction (born, by an add or seed move),
factor swaps, threshold tunings, and removal. Slots present at the end of an
iteration are kept (teal); slots removed before then are retracted (grey), with
the model commitment dropped while its evidence is preserved. The label below
each kept slot gives its final symbolic form. The horizontal axis is the
cumulative proposal step across iterations; bands give the iteration index,
active-feature count $k$, and RMSE. Iterations $0$--$2$ only accrete features,
whereas iteration~$3$ explores four slots and retracts three, collapsing to the
single surviving mode-conditioned law ($k=1$), so the accepted model is reached
by pruning and compression rather than monotone accumulation. Thresholded
factors are shown in the internal $\mathrm{relu}(x-\tau)$ form, with fitted
$\tau=-\theta$ relative to the $[z+\theta]_+$ parametrization of Eq.~\ref{eq:bfactor_spectral_form}.}
\label{fig:feature_lifecycle}
\end{figure}

\subsection{CategoryScienceClaw adds a categorical layer to ScienceClaw}
\label{sec:categoryscienceclaw_results}
\hlyellow{CategoryScienceClaw is the second case framework in this paper, and is a wrapper around ScienceClaw. ScienceClaw is the agentic execution substrate where it organizes scientific work as skill-mediated production of artifacts, keeps metadata and parent lineage for those artifacts, exposes unresolved questions as shared open needs, uses pressure and feedback signals to decide what agents should attempt next, and allows active workflows to mutate as evidence changes. In the ScienceClaw $\times$ Infinite architecture, Infinite adds the communication substrate (Fig.~}\ref{fig:scienceclaw_infinite}\hlyellow{): structured posts, hypotheses, methods, findings, claim links, votes, comments, reputation, and moderation. This makes the combined system not merely a tool-calling loop but an artifact-centered scientific workflow and discourse system whose computation and communication traces can both be audited.}

\hlyellow{CategoryScienceClaw adds the explicit categorical and proof-carrying layer to that substrate. Skills become morphism signatures, artifacts become typed objects with content hashes and parents, open needs become typed holes to be filled by compatible morphisms, worker heartbeats become decentralized reactions, and certificates/audits check type and provenance validity. The layer is domain-general; fatigue crack growth in high-entropy alloys is the worked scientific example used here because its descriptor choice, model-selection gate, and validation caveat can be displayed compactly. In the HEA run studied here, this layer records materials records, phase labels, fatigue descriptors, candidate screening rules, accepted and rejected model artifacts, gate records, regime-transition audits, and synthesized figure/report artifacts. Thus the scientific object is not only a final plot or a fluent written claim; it is a typed discovery graph whose morphisms connect scientific inputs to gate-checked interpretations.}

\begin{figure}[h]
\centering
\begin{tikzpicture}[
  box/.style={draw,rounded corners=2pt,fill=blue!6,minimum width=2.65cm,minimum height=0.72cm,align=center,font=\scriptsize},
  gate/.style={draw,rounded corners=2pt,fill=orange!12,minimum width=2.65cm,minimum height=0.72cm,align=center,font=\scriptsize},
  disc/.style={draw,rounded corners=2pt,fill=green!10,minimum width=2.65cm,minimum height=0.72cm,align=center,font=\scriptsize},
  arr/.style={-{Stealth[length=2mm]},line width=0.55pt}
]
\node[box] (skills) at (0,0) {typed skill\\registry};
\node[box] (agents) at (3.2,0) {agents select\\skills};
\node[box] (arts) at (6.4,0) {immutable\\artifacts};
\node[gate] (reactor) at (6.4,-1.45) {ArtifactReactor\\pressure + overlap};
\node[box] (index) at (3.2,-1.45) {global index\\open needs};
\node[gate] (mut) at (0,-1.45) {mutation\\active graph};
\node[disc] (inf) at (9.6,-0.7) {Infinite\\posts + feedback};
\draw[arr] (skills) -- (agents);
\draw[arr] (agents) -- (arts);
\draw[arr] (arts) -- (reactor);
\draw[arr] (reactor) -- (index);
\draw[arr] (index) -- (mut);
\draw[arr] (mut) -- (skills);
\draw[arr] (arts) -- (inf);
\draw[arr] (inf) |- (reactor);
\end{tikzpicture}
\caption{ScienceClaw $\times$ Infinite as a distributed typed artifact system. ScienceClaw executes typed skill compositions and records immutable lineage; the ArtifactReactor and mutation layer coordinate active search; Infinite turns computational artifacts into public scientific discourse with feedback that can re-enter the discovery loop.}
\label{fig:scienceclaw_infinite}
\end{figure}

The implemented structure should be read precisely. Let $A_t$ be the finite set of ScienceClaw artifacts visible at time $t$, and let
\[
\tau:A_t\longrightarrow T
\]
assign each artifact its recorded artifact type. At minimum, this gives a family
\[
I_t(X)=\{a\in A_t:\tau(a)=X\},\qquad X\in T,
\]
that is, a copresheaf over the discrete category of implemented artifact types. Each artifact $a$ also records a skill label $\sigma(a)$, a producer agent, a content hash, and a parent list $p(a)=(a_1,\ldots,a_k)$. Thus the realized computational record is not merely a set of files but a typed acyclic hypergraph whose generating operations have the form
\[
(a_1,\ldots,a_k)\xrightarrow{\ \sigma(a)\ } a .
\]
The unary shadow of this hypergraph generates a free provenance category $\Prov_t$; the full multi-parent synthesis structure is more accurately read as a typed multicategory or colored-operadic provenance record. This is weaker than a software-enforced schema category, and should not be overclaimed. It is nevertheless already enough to support the central categorical interpretation: realized scientific work is a typed population of artifacts together with composable, auditable generating operations.

Open needs add the missing-hole structure. A need signal specifies a desired artifact type, query, rationale, and optional preferred skills. In categorical terms, it marks an unfilled target object or missing cone in the active provenance diagram. The ArtifactReactor does not yet solve a formal Kan-extension or lifting problem; it performs the implemented engineering analogue, using pressure scores and schema overlap to find artifacts and skills that may complete the diagram. Mutation then changes the active subgraph by marking artifacts active, rejected, superseded, or pruned while retaining immutable lineage.

Infinite extends this provenance record into scientific discourse. A post is not just text; it is a typed claim artifact with hypothesis, method, findings, evidence, links, and feedback. Links such as extension, contradiction, replication, or citation are discourse morphisms. Votes, comments, and reputation are verifier signals. The current integration therefore defines an implemented publication map from ScienceClaw artifacts and parent relations into Infinite posts, artifact records, and discourse links. It is not yet a certified functor in software, but it has the functorial shape required for one: computational lineage is carried into a public, inspectable, revisable scientific record.

\hlyellow{The CategoryScienceClaw worked case is a fatigue-crack-growth discovery test on an open high-entropy-alloy database. The scientific question is whether fatigue-crack-growth resistance can be screened by the scalar threshold $\Delta K_{th}$ alone, or whether threshold resistance must be separated from post-threshold crack-growth stability measured by the Paris slope $m$. CategoryScienceClaw records the threshold-only interpretation as a rejected audit object and commits a two-axis damage-tolerance map as the accepted hypothesis for this public-data case.}

%\FloatBarrier

\subsection{CategoryScienceClaw HEA fatigue-crack-growth} as a typed discovery graph
\label{sec:categoryscienceclaw_mechanics}

\hlyellow{CategoryScienceClaw makes the categorical framework operational at the level of individual scientific claims while retaining the larger ScienceClaw discovery-system structure of typed skills, immutable lineage, pressure coordination, workflow mutation, and public discourse. The CategoryScienceClaw application is a high-entropy-alloy (HEA) fatigue-crack-growth investigation because it has direct mechanics and materials-science content: it tests whether a scalar threshold descriptor is sufficient for fatigue screening, or whether the accepted representation must separate threshold resistance from post-threshold crack-growth stability. The input is the open Materials Cloud HEA fatigue database, which includes fatigue-crack-growth-rate records with composition, phase constitution, processing, tensile properties, test conditions, $\Delta K_{th}$, and Paris slope $m$ \citep{chen_hea_fatigue_database_2022,chen_hea_fatigue_dataset_2022}. The generated replay files include the cleaned FCGR table, typed artifact payloads, the discovery-test JSON, a categorical discovery graph, and the figure/report artifacts.}

\hlyellow{The mechanics formalism is explicit. In the Paris regime, fatigue-crack-growth rate is represented by}
\[
\displaystyle \frac{da}{dN}=C(\Delta K)^m,
\]
\hlyellow{where $\Delta K$ is the stress-intensity-factor range, $m$ is the Paris slope, and $\Delta K_{th}$ is the threshold below which long-crack growth is conventionally treated as negligible. A threshold-only screening rule ranks candidates by}
\[
\displaystyle M_0:\qquad Q_{\rm th}=\Delta K_{th}.
\]
\hlyellow{This rule treats a larger threshold as uniformly better. The accepted model is instead a two-axis damage-tolerance descriptor}
\[
\displaystyle M_1:\qquad Q_{\rm DT}=\bigl(\Delta K_{th},m,\phi\bigr),
\]
\hlyellow{where $\phi$ records phase class. The scientific reason is that $\Delta K_{th}$ and $m$ represent different risks: the former describes resistance to crack-growth onset, while the latter describes acceleration after threshold crossing.}

\hlyellow{For records with both $\Delta K_{th}$ and $m$, CategoryScienceClaw finds that FCC+BCC HEA records have a higher median threshold than FCC records, $7.0$ versus $4.8~\mathrm{MPa}\sqrt{\mathrm{m}}$, but also a much steeper median Paris slope, $5.9$ versus $3.155$. Thus the group that looks favorable under $M_0$ is less favorable once post-threshold crack-growth stability is represented. The declared gate accepts the two-axis descriptor when a phase group has higher median $\Delta K_{th}$ than FCC records but at least $50\%$ higher median Paris slope. The FCC+BCC group passes this gate: the median threshold ratio is $1.46$ and the median slope ratio is $1.87$.} (Fig.~\ref{fig:csc_hea_fatigue})

\hlyellow{Categorically, the run is a typed discovery move. The old fatigue-screening regime contains HEA records, phase labels, and scalar threshold values. The enlarged regime contains the Paris-slope object, threshold--slope paired descriptor, rejected threshold-only model, accepted two-axis damage-tolerance map, gate record, and science claim. In the notation of} Definition~\ref{def:transition}, \hlyellow{the objectwise discovery-residual diagnostic is}
\[
R(A') =
I'_{t+1}(A')\setminus \mathrm{im}(\bar\rho_{A'}).
\]
\hlyellow{For the HEA fatigue run, the transported objects include composition, phase class, testing condition, and $\Delta K_{th}$ records. The residual objects include Paris-slope stability, the threshold--slope map, and the high-threshold/high-slope risk class. This residual content is the scientific value of the case: it changes the design-relevant descriptor from ``maximize threshold'' to ``separate threshold resistance from post-threshold growth stability.''}

\begin{figure}[ht]
\centering
\includegraphics[width=\linewidth]{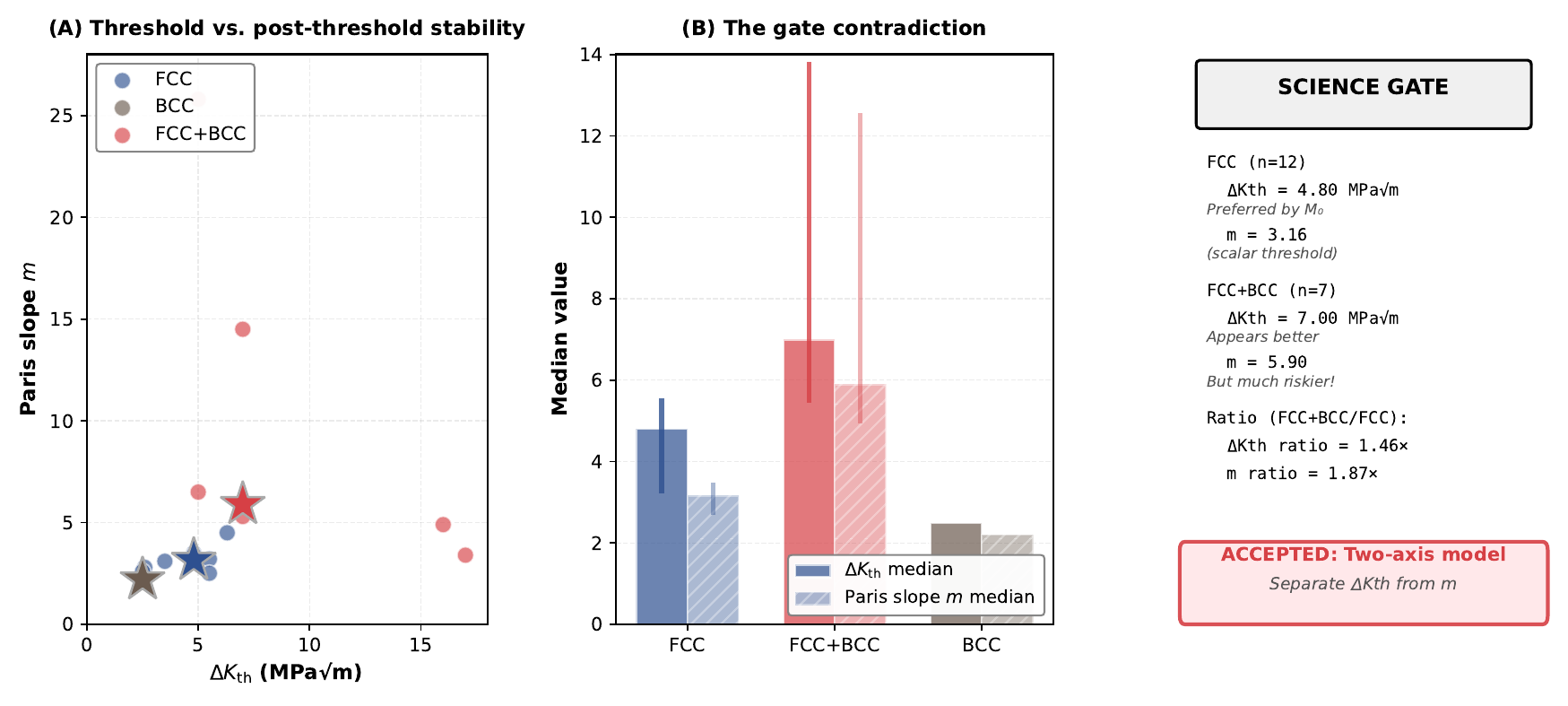}%
\caption{\hlrev{CategoryScienceClaw high-entropy-alloy fatigue-crack-growth investigation. The figure maps fatigue threshold $\Delta K_{th}$ against Paris slope $m$, compares median threshold and slope by phase class, and records the gate by which a threshold-only ranking is rejected in favor of a two-axis damage-tolerance map. The scientific hypothesis is that HEA fatigue screening should separate crack-growth threshold from post-threshold growth stability; high-threshold/high-slope dual-phase records are a distinct risk class requiring prospective validation.}}
\label{fig:csc_hea_fatigue}
\end{figure}

\hlyellow{This mechanics gate is close in spirit to the Builder/Breaker MDL gate but differs in what is being charged. Builder/Breaker accepts symbolic DAG revisions when paired refitting reduces total description length on accumulated protein-mechanics evidence. CategoryScienceClaw accepts the HEA fatigue descriptor when the richer typed representation resolves a design-relevant contradiction in the public FCGR summary: high threshold does not imply low post-threshold growth sensitivity. In both cases rejected alternatives remain first-class audit objects rather than disappearing from the record.}

\paragraph{A concrete instance: CategoryScienceClaw HEA fatigue-crack-growth}.

\hlyellow{A compressed provenance diagram is}
\[
\displaystyle\begin{gathered}
a_{\rm HEA}\xrightarrow{\rm threshold}a_{\Delta K_{th}},\qquad
a_{\rm HEA}\xrightarrow{\rm Paris\ fit/record}a_m,
\qquad
a_{\rm HEA}\xrightarrow{\rm phase\ typing}a_\phi,\\
(a_{\Delta K_{th}},a_m,a_\phi)\xrightarrow{\rm gate}(M_0,M_1,a_{\rm accepted}),\qquad
(a_{\rm accepted},M_0,M_1)\xrightarrow{\rm synthesis}a_{\rm claim}.
\end{gathered}
\]
\hlyellow{The HEA fatigue case exemplifies discovery in the formal sense: evidence drives a regime transition, and residual content is mechanically novel. The system tests whether fatigue resistance can be screened by threshold $\Delta K_{\rm th}$ alone or requires a two-axis model pairing threshold with Paris slope $m$. Under threshold-only screening, FCC+BCC dual-phase HEAs appear favorable (median $\Delta K_{\rm th}=7.0$ vs. 4.8 MPa$\sqrt{\mathrm{m}}$ for FCC). Yet when slope is added, the same materials show median $m=5.9$ versus 3.16, revealing a risk class the scalar descriptor cannot accommodate. The gate rejects single-axis screening in favor of a two-axis damage-tolerance model validated against 180+ records in the open Materials Cloud database. The resulting principle resolves a design constraint in materials science: threshold and post-threshold crack-growth stability are decoupled. The framework treats this transition not as a retrospective narrative but as a prospective design criterion: any HEA with elevated threshold and elevated Paris slope requires mechanical validation before acceptance, establishing a testable prediction for future alloys.}

\hlyellow{Supporting audits of 7T10 structure-contact mechanics, mechanobiology force paths, membrane biophysics, and fiber-network mechanics are reported in} Figs.~S1--S4.
%\FloatBarrier

\subsection{Instantiations across agentic systems}
\label{sec:instantiations}

The same formal pattern appears across several agentic systems developed in this research line, with different regimes, artifact populations, and gates. Table~\ref{tab:systems} summarizes the mapping. The table is deliberately not a leaderboard. It is a structural comparison of how different systems instantiate typed artifact states, fixed-regime updates, and regime-enlarging mechanisms.

\begin{table}[!htbp]
\centering
\footnotesize
\setlength{\tabcolsep}{3pt}
\begin{tabular}{@{}>{\raggedright\arraybackslash}p{0.17\linewidth}>{\raggedright\arraybackslash}p{0.25\linewidth}>{\raggedright\arraybackslash}p{0.25\linewidth}>{\raggedright\arraybackslash}p{0.23\linewidth}@{}}
\toprule
System & Main artifact regime & Gate or verifier & Discovery-relevant mechanism \\
\midrule
ProtAgents \citep{ghafarollahi_protagents_2024} & protein sequences, structures, force and dynamics artifacts & agent critique plus physics and ML tool outputs & physics simulators expose failure modes of generated designs \\
GraphAgents~\cite{stewart2026graphagentsknowledgegraphguidedagentic} & layered material knowledge graphs and substitution candidates & feasibility, manufacturability, and cross-source consistency & cross-layer composition of evidence from heterogeneous corpora \\
Sparks / SciAgents \citep{ghafarollahi_sparks_2025,ghafarollahi_sciagents_2025} & research goals, hypotheses, code, results, and reports & novelty, interestingness, executable results & ideation-to-experiment loops over a research-artifact schema \\
Builder/Breaker \citep{buehler_break_world_2026} & protein-mechanics evidence and symbolic DAG world models & paired MDL compression & adversarial evidence forces retractions and regime enlargement \\
ScienceClaw $\times$ Infinite with CategoryScienceClaw layer \citep{wang_scienceclaw_infinite_2026} & execution artifacts and open needs; discourse posts, claim links, feedback; categorical objects, morphisms, lineage, gates, figures, and reports & pressure scoring, schema overlap, mutation status, discourse feedback, reputation, and proof certificates & plannerless artifact exchange plus public claim verification, lifted into typed categorical provenance \\
% ScienceClaw $\times$ Infinite \citep{wang_scienceclaw_infinite_2026} & typed skills, immutable artifacts, open needs, and discourse artifacts & pressure, schema overlap, mutation, feedback, reputation & plannerless artifact exchange and public claim-level verification \\
\bottomrule
\end{tabular}
\caption{Agentic systems as typed artifact systems. Each system has a schema, artifact population, gate, and mechanism by which the active regime can be stressed or enlarged.}
\label{tab:systems}
\end{table}

%\FloatBarrier

\subsection{Relation to existing formalisms}
\label{sec:related}

The problem addressed here also has a historical and philosophical lineage. Bacon and Whewell treated scientific method as an organized practice of collecting, comparing, and conceptually ordering phenomena; Peirce emphasized inquiry as a disciplined process for stabilizing belief under doubt \citep{bacon_novum_organum_1620,whewell_inductive_sciences_1840,peirce_fixation_1877}. Popper, Kuhn, and Lakatos then made failure, anomaly, paradigm change, and research-programme revision central to accounts of scientific progress \citep{popper_logic_1959,kuhn_structure_1962,lakatos_falsification_1970}.

Goethe and Whitehead supply a complementary lineage in which form, relation, and process are primary scientific objects rather than incidental descriptions \citep{goethe_metamorphosis_1790,whitehead_science_modern_world_1925}. Polanyi and Hacking are also relevant: scientific knowledge is not only explicit proposition but skillful practice, and experimental intervention is part of what makes a representation scientifically real \citep{polanyi_personal_knowledge_1958,hacking_representing_intervening_1983}. The contribution of the present paper is to give this broad intuition an auditable mathematical substrate for AI systems: typed artifacts, composable provenance, explicit gates, and verified regime transitions.

\subsubsection{Relation to scientific knowledge graphs and workflow provenance}
Scientific knowledge graphs and workflow-provenance models already provide important
ways to represent entities, relations, computational activities, and data products in a graph~\citep{provo2013, bechhofer2013linkeddata, davidson2008provenance, jaradeh2019orkg}. The present contribution
is therefore not the observation that computation can appear in a graph. Rather, it is to give
a categorical semantics in which knowledge, computation, verification, rejection, public
discourse, and schema-changing discovery are components of a single scientific state
(Definitions~\ref{def:regime}, \ref{def:copresheaf}, \ref{def:elements}, and~\ref{def:transition}). Computation is represented both at the type level, as a morphism
\(f:A\to B\), and at the artifact level, as a realized map
\(I_t(f):I_t(A)\to I_t(B)\) or, in the multicategorical case, as a multi-parent operation
(Proposition~\ref{prop:multicat_kan}). Discovery is then not graph completion inside
a fixed ontology, but a verified regime transition together with residual content beyond
functorial transport (Proposition~\ref{prop:kan_obstruction}).

\begin{table}[t]
\centering
\small
\setlength{\tabcolsep}{6pt}
\renewcommand{\arraystretch}{1.2}
\begin{tabular}{@{}p{0.18\linewidth}p{0.36\linewidth}p{0.38\linewidth}@{}}
\toprule
\textbf{Aspect} &
\textbf{Knowledge graph / workflow-provenance view} &
\textbf{Categorical knowledge--computation graph} \\
\midrule

Vocabulary &
Fixed or slowly curated ontology of entity and relation types &
Schema category \(\Sschema_b\) of artifact types and admissible operations, with possible
transition \(u:\Sschema_b\to\Sschema_{b'}\) \\

Computation &
Activities, services, scripts, or workflow steps that consume inputs and generate outputs &
Typed morphisms \(f:A\to B\) and realized artifact maps
\(I_t(f):I_t(A)\to I_t(B)\), with multi-parent synthesis represented by a typed
multicategory or colored-operadic provenance object \\

Scientific commitment &
Validity is often represented as annotations, confidence/provenance metadata, or downstream checks &
Commitment, rejection, supersession, and stress testing are represented by gates \(\V_b\),
description-length or model-selection functionals \(L_b\), diagnostics, and retained
rejected artifacts \\

Public discourse &
Claims, citations, comments, and replications may be represented as graph nodes or
external communication records &
A discourse layer \(\mathsf D_t\) and publication map \(\pi_t\) carry audited artifact
paths or hyperpaths into claim, evidence, objection, replication, and validation objects \\

Dynamics &
Graph completion, link prediction, or workflow execution inside a fixed representational frame &
Fixed-regime search is \(\Phi_b\); discovery is a verified transition with transported
evidence \(\Lan_u I_t\) and comparison map
\(\bar\rho:\Lan_u I_t\to I'_{t+1}\) \\

Discovery content &
New facts, links, generated outputs, or completed workflow products &
Objectwise residual artifacts
\(I'_{t+1}(A')\setminus\mathrm{im}(\bar\rho_{A'})\), together with recorded
regime-level additions such as new object types, morphisms, tools, verifiers, or grammar
productions needed to express them \\
\bottomrule
\end{tabular}
\caption{From knowledge graphs and workflow provenance to categorical knowledge--computation graphs.}
\label{tab:kg_to_ckg}
\end{table}

\subsubsection{Relation to categorical learning, coalgebra, and model selection}
The framework is adjacent to, but distinct from, categorical deep learning. Categorical accounts of neural networks and learning, including backpropagation as a functor and categorical deep learning programs based on parametric maps and lenses, clarify the structure of model architectures and training procedures \citep{fong_spivak_tuyeras_2019,gavranovic_cdl_2024,cruttwell_parametric_lenses_2024,crescenzi_zanasi_survey_2024}. The present paper instead studies systems whose state is a typed scientific artifact population and whose most important operation may be a change of regime. The unit of analysis is therefore not only a trained model, but a scientific workflow capable of altering its own schema of admissible artifacts.

The framework also touches coalgebra and dynamical systems. A fixed-regime update resembles a state transition system and can be studied with coalgebraic ideas such as bisimulation and final behavior \citep{aczel_mendler_1989,rutten_coalgebra_2000}. The difference is that discovery changes the state space itself: the relevant trajectory lives over a base of regimes rather than inside one fixed carrier. This is why the indexed or fibered picture is natural.

We finally note that the gate connects the framework to MDL, Bayesian model selection, algorithmic information, and open-endedness. MDL and Solomonoff-style compression provide one rigorous account of why a simpler world model that explains broader evidence is preferable \citep{rissanen_mdl_1978,gruenwald_mdl_2007,solomonoff_1964,hutter_aixi_2005,schwarz_bic_1978}. Open-endedness and automated-scientist systems study how systems escape fixed objectives or generate new scientific tasks \citep{stanley_open_endedness_2017,wang_poet_2019,lu_ai_scientist_2024,lu_ai_scientist_v2_2025}. Scientific machine learning supplies many of the typed operators that appear as morphisms in our schemas, from physics-informed learning to operator learning and equation-free multiscale computation (e.g., \citep{Gu2018EML,karniadakis_pinns_2021,lu_deeponet_2021,kevrekidis_equation_free_2003}). Similarly, the sparse identification of governing equations~\cite{doi:10.1073/pnas.1517384113} and the extraction of symbolic models via inductive biases~\cite{cranmer2020discoveringsymbolicmodelsdeep} provide established paradigms for discovering interpretable physical laws, which appear in our framework as composable, complexity-penalized morphisms.
The contribution here is to place these components into one typed, provenance-preserving, regime-changing account of discovery.

\subsection{Design principles and open problems}
\label{sec:open_results}

The framework suggests five open problems articulated as follows:

\begin{openproblem}[Convergence on growing regimes]
Classical fixed-point theory assumes a fixed underlying space. Discovery systems iterate $\Phi_b$ while also producing regime transitions $b_0\to b_1\to b_2\to\cdots$. Under what conditions does the sequence of transported artifact states converge to a stable object in an appropriate colimit or indexed category? When is non-convergence productive exploration, and when is it unproductive oscillation?
\end{openproblem}

\begin{openproblem}[Scaling laws for discovery]
Standard scaling laws measure loss or benchmark performance inside a fixed regime. Discovery requires a different observable: the rate, quality, and accepted value of regime enlargements. How do discovery rates scale with model size, tool diversity, simulator fidelity, evidence heterogeneity, schema richness, and verifier strength?
\end{openproblem}

\begin{openproblem}[Verification tooling for agentic loops]
The framework requires provenance preservation and gate compatibility. Practical systems need tooling that can replay downstream artifact chains, check approximate naturality, account for description length or pressure scores, and record retractions without deleting provenance.
\end{openproblem}

\begin{openproblem}[Learning the base schema category]
The framework assumes a schema category $\Sschema_b$. Current systems construct it by engineering choice. A major open problem is to learn useful schema categories from corpora, tool signatures, code, figures, equations, and laboratory protocols while keeping morphisms scientifically meaningful. Functorial data migration, olog-style modeling, and category-theoretic accounts of hierarchical materials and building-block replacement provide a starting toolkit \citep{spivak_schemas_2012,buehler_spivak_2011,giesa_spivak_2011_patterns,giesa_spivak_2012_buildingblock}; the open problem is computational: how to learn $\Sschema_b$ and an accompanying description-length functional $L_b$ from data at the scale of working scientific corpora.
\end{openproblem}

\begin{openproblem}[Multicategorical discovery]
The Kan audit in Section~\ref{sec:builder_breaker_results} distinguishes generator-level transport from composite reachability through newly admitted multi-input morphisms. Proposition~\ref{prop:multicat_kan} gives the corresponding multicategorical obstruction under the standard operadic Kan-extension setup. The open problem is now computational and statistical: how can an agentic system learn the typed multicategory or colored-operadic schema $M_b$ from traces, tool signatures, equations, figures, and artifact lineages, and how can it estimate the artifact and operation components of discovery cost at scale?
\end{openproblem}

% =====================================================================
\section{Conclusions and Outlook}
\label{sec:conclusions}
% =====================================================================

This paper has argued that agentic scientific discovery requires two levels of structure. Inside a fixed regime, an agentic system updates typed artifact populations: it adds data, models, simulations, hypotheses, critiques, and reports while preserving provenance. At that level, copresheaves, categories of elements, natural transformations, and endofunctorial dynamics give a precise language for structures already present in working discovery systems. Discovery in the stronger sense occurs when evidence forces a transition to a new regime, and left Kan extension gives the mathematical language for transporting old artifacts into the enlarged vocabulary.

The Builder/Breaker protein-mechanics case shows this structure quantitatively: a symbolic world model is revised by adversarial evidence and an MDL gate, with rejected edits and retractions visible in the audit trail. 
\hlyellow{CategoryScienceClaw shows the same structure as a proof-carrying categorical layer over the ScienceClaw $\times$ Infinite substrate: typed skills, immutable lineage, open needs, pressure-based coordination, and public discourse become typed objects and morphisms. The HEA fatigue-crack-growth run records a materials-science model revision as inspectable provenance: the rejected threshold-only ranking, accepted two-axis damage-tolerance map, gate record, generated figure, and prospective validation claim are retained as typed artifacts.}

More broadly, the work suggests a reciprocal program: AI can accelerate mechanics, but mechanics can also
discipline AI by providing concepts of state, load, failure, invariance, admissible transformation, and
constitutive closure for self-revising discovery systems.

\subsection*{Toward an executable categorical discovery substrate}
\hlyellow{CategoryScienceClaw is already close to the categorical picture because it makes the ScienceClaw data model and workflow discipline explicit. It has typed artifact metadata, skill labels, immutable parent lineage, content hashes, open needs, pressure-ranked reactions, mutation of active status, public discourse objects with feedback signals, and domain-specific gate records. The categorical layer adds object and morphism signatures, typed needs, proof certificates, and audit/replay checks. This is enough to read the current implementation as a typed artifact family equipped with an acyclic provenance hypergraph and a publication map into a discourse graph. The HEA fatigue case in this paper is a scientific audit within that broader substrate: the threshold-only descriptor is preserved as a rejected model, the threshold--slope descriptor is preserved as the accepted representation, and the high-threshold/high-slope class is preserved as a testable materials-design claim. Supporting mechanics cases remain in the SI.}

The next step is therefore not to replace ScienceClaw, but to lift structures already present in ScienceClaw and CategoryScienceClaw into explicit mathematical objects. Skill manifests would become morphism signatures with declared input objects, output objects, constraints, and verifiers. Artifact stores would become materialized fibers of a copresheaf over that schema. Multi-parent synthesis would be treated as a typed multicategory or operadic composition, rather than only as a parent list. Open needs would become typed holes or lifting problems. Public discourse would become a category whose objects are claims, posts, artifacts, evidence bundles, objections, replications, and validation signals. The CategoryScienceClaw publication map would then be a checked map from provenance to discourse, preserving identities, parent-child composition, and validation status.

Making this explicit would change the operational status of the platform. The system would no longer merely display provenance; it could verify that provenance diagrams commute, that every public claim has an admissible artifact path, that retractions and supersessions preserve old evidence, and that new artifact types are introduced through recorded schema transitions. In the language of this paper, ScienceClaw equipped with the CategoryScienceClaw layer would move from an operational typed-artifact platform to an executable categorical discovery substrate. The impact would be practical: stronger audit trails, machine-checkable claim lineage, better comparison among autonomous investigations, and a route toward measuring discovery as the residual content added beyond transport of prior evidence.

The central insight is therefore that search is iteration inside a typed scientific regime; discovery is a verified regime transition---enlargement, restructuring, or compression---together with the residual content that lies outside functorial transport of prior evidence.
This statement is mathematical enough to constrain future theory and concrete enough to guide the design of future AI discovery systems.

% =====================================================================
\section{Materials and Methods}
\label{sec:methods}
% =====================================================================

\subsection{Regimes, schemas, and copresheaves}

\begin{definition}[Discovery regime]
\label{def:regime}
A \textbf{discovery regime} is a tuple
\[
b=(\Sschema_b,\Gamma_b,\V_b,L_b),
\]
where $\Sschema_b$ is a small schema category of artifact types and allowed operations, $\Gamma_b$ is a grammar for composing admissible artifacts or model structures, $\V_b$ is a verifier or gate, and $L_b$ is an optional description-length functional. The functional $L_b$ may be presented either absolutely on artifact states or as a relative (conditional) length functional; we write $L_b(I\mid J)$ for the code length of $I$ given a subobject $J\hookrightarrow I$, with $L_b(I)\equiv L_b(I\mid\varnothing)$ recovering the absolute case. In this paper, $\Sschema_b$ carries the main type-and-operation structure; $\Gamma_b$, $\V_b$, and $L_b$ record the additional grammar, acceptance, and scoring data attached to that structure. When these additional data are clear from context, ``regime'' is used informally as shorthand for the schema and its associated commitments.
\end{definition}

\begin{definition}[Artifact-state copresheaf]
\label{def:copresheaf}
For a fixed regime $b$, an artifact state at time $t$ is a copresheaf
\[
I_t:\Sschema_b\to\Set .
\]
For each object $A\in\Obj(\Sschema_b)$, $I_t(A)$ is the set of artifacts of type $A$. For each morphism $f:A\to B$, $I_t(f):I_t(A)\to I_t(B)$ maps artifacts along an allowed operation.
\end{definition}

\begin{definition}[Category of elements]
\label{def:elements}
The realized provenance category of $I_t$ is the category of elements $\int_{\Sschema_b}I_t$. Its objects are pairs $(A,x)$ with $x\in I_t(A)$. A morphism $(A,x)\to(B,y)$ is a morphism $f:A\to B$ in $\Sschema_b$ such that $I_t(f)(x)=y$.
\end{definition}

\hlyellow{For partially realized scientific workflows, $I_t(f)$ may be a partial function, relation, stochastic kernel, or span rather than a total function. Numerical and statistical outputs may be accepted up to the tolerance, equivalence, or model-selection criterion declared by $\V_b$ or $L_b$. The total-function presentation is the clean base case; partiality can be handled by replacing $\Set$ with a category of partial maps, relations, spans, Kleisli morphisms, or typed records with status events.}

\subsection{Fixed-regime update and endofunctoriality}

Let $\K_b$ be a category whose objects are artifact-state copresheaves over $\Sschema_b$ and whose morphisms are provenance-preserving refinements. In the strict presentation used for the structural observations and the Kan proposition, a morphism $\alpha:I_t\to J_t$ is a componentwise injective natural transformation whose components $\alpha_A:I_t(A)\to J_t(A)$ preserve artifact identity, type, and provenance metadata, up to the equivalence or tolerance declared by $\V_b$. The injectivity assumption is the mathematical version of a practical audit rule: a refinement may add, annotate, or supersede artifacts, but it may not silently identify two previously distinct accepted artifacts.

\begin{definition}[Fixed-regime update]
\label{def:fixedupdate}
A \textbf{fixed-regime agentic update} is an endomap on artifact states,
\[
\Phi_b:\Obj(\K_b)\to\Obj(\K_b),
\]
that becomes an endofunctor $\Phi_b:\K_b\to\K_b$ when it maps refinement morphisms to refinement morphisms and preserves identities and composition.
\end{definition}
A realized committed trajectory will be written as a sequence
\[
I_t \xrightarrow{\delta_t} I_{t+1}=\Phi_b(I_t),
\]
where \(\delta_t\) is the refinement morphism that embeds the previous committed
ledger into the next committed ledger. Thus \(\Phi_b\) acts on states and on
refinements between alternative states, whereas \(\delta_t\) is the particular
accepted step in one realized run.

The update may be implemented by agents, simulators, retrieval systems, symbolic search, or human feedback. The categorical requirement is not that the implementation be deterministic or neural-free, but that the committed state preserve typed provenance. Concretely, an implementation earns the endofunctor notation only when each refinement $\alpha:I\to J$ induces a refinement $\Phi_b(\alpha):\Phi_b(I)\to\Phi_b(J)$ and these induced maps respect identity refinements and composition of refinements. In stochastic or asynchronous implementations, this condition is imposed on the committed-state relation, stochastic kernel, or Kleisli morphism rather than on every raw execution trace.

\subsection{Builder, Breaker, and gates}

\begin{definition}[Builder/Breaker system]
\label{def:bb}
Within a regime $b$, a Builder/Breaker system consists of a proposal mechanism $B$, an evidence-producing mechanism $K$, and a gate $\V_b$. The Breaker $K$ selects or generates evidence intended to stress the current artifact state. The Builder $B$ proposes candidate increments or revisions. The gate $\V_b$ decides whether the candidate is committed, rejected, superseded, or held for review.
\end{definition}

For the protein-mechanics case, the gate is MDL:
\begin{equation}
\V_b(M',M;D)=1
\quad\Longleftrightarrow\quad
L_{\mathrm{model}}(M')+L_{\mathrm{data}}(D\mid M')
<
L_{\mathrm{model}}(M)+L_{\mathrm{data}}(D\mid M).
\label{eq:mdl_methods}
\end{equation}
% Comparisons are paired: both models are refit on the same accumulated data $D$. For ScienceClaw $\times$ Infinite, $\V_b$ is pressure-based and social-technical: schema overlap, artifact status, conflict resolution, open-need satisfaction, discourse feedback, and reputation all contribute to commitment.
Comparisons are paired: both models are refit on the same accumulated data $D$. For CategoryScienceClaw, $\V_b$ combines mechanics-specific gates with platform-level commitment signals: candidate models are compared by model-selection scores and retained only when diagnostics pass, while pressure, mutation status, discourse feedback, and reputation help determine which artifacts remain active in the broader discovery graph.

\subsection{Verified regime transition and Kan transport}

\begin{definition}[Verified regime transition]
\label{def:transition}
Given an old state $I_t\in\K_b$ and a new accepted state $I'_{t+1}\in\K_{b'}$, a \textbf{verified regime transition} from $b$ to $b'$ relative to $(I_t,I'_{t+1})$ consists of a schema functor
\[
u:\Sschema_b\to\Sschema_{b'}
\]
together with a preservation natural transformation
\[
\rho:I_t\longrightarrow u^*I'_{t+1},
\]
where $u^*:[\Sschema_{b'},\Set]\to[\Sschema_b,\Set]$ is restriction along $u$. The state-dependence is part of the data: a verified transition is a piece of structure attached to a specific pair of states, not a universal property of the schemas alone. The transition satisfies four compatibility conditions:
\begin{enumerate}[leftmargin=1.7em,topsep=2pt,itemsep=2pt]
    \item $u$ is faithful on morphisms and injective on objects, except where the transition explicitly records a type split, quotient, or recoding;
    \item $\rho$ is componentwise injective and provenance-preserving, so old accepted artifacts remain inspectable after the transition;
    \item old commitments remain accepted after transport: for predicate-valued gates, $\V_b(I_t)=1$ implies $\V_{b'}(\mathrm{im}(\bar\rho))=1$, where $\bar\rho:\Lan_uI_t\to I'_{t+1}$ is the adjoint transpose of $\rho$.
    \item the new state is itself gate-accepted in its own regime: $\V_{b'}(I'_{t+1})=1$.
\end{enumerate}
If a description-length functional is present, the transported old state has no larger code length than the original old state, $L_{b'}(\mathrm{im}(\bar\rho))\le L_b(I_t)$, or else the increase is recorded as an explicit recoding cost. The transition is \textbf{nontrivial} when $\Sschema_{b'}$ contains a new object, morphism, verifier, grammar production, or tool class not generated inside $u(\Sschema_b)$, or when $I'_{t+1}$ contains accepted artifacts not in $\mathrm{im}(\bar\rho)$.
\end{definition}

The transported old artifact state is the left Kan extension
\begin{equation}
(\Lan_u I_t)(A') \cong \colim_{(uA\to A')\in(u\downarrow A')} I_t(A),
\label{eq:lan_formula}
\end{equation}
where the colimit exists because $\Set$ is cocomplete and $\Sschema_b$ is small. If $u$ is fully faithful, the unit $I_t\to u^*\Lan_u I_t$ is an isomorphism, so old fibers are preserved by transport alone. If $u$ is only faithful, if it recodes old object names, or if the new schema adds morphisms among old types, preservation is not automatic; it is supplied by the explicit map $\rho$ in Definition~\ref{def:transition}. When the comma category $(u\downarrow A')$ is empty---that is, when $A'$ receives no morphisms from $u(\Sschema_b)$---the colimit is the initial object of $\Set$, giving $(\Lan_u I_t)(A')=\varnothing$. New evidence then populates such fibers, or activates new morphisms absent from the old regime.

\subsection{Structural observations and the Kan obstruction}

The first three statements are closure and verification observations: they unpack what is already forced by a fixed schema, provenance-preserving refinement, and a declared gate. The first nontrivial categorical obstruction is the Kan statement that follows, where the comparison map makes residual content unavoidable. The later propositions record the functorial consequences needed for auditability, composition of transitions, and the multicategorical reading of product-like discovery moves.

\begin{observation}[Fixed-regime reachability]
\label{obs:reachability}
Let $b$ be fixed and let $\Phi_b:\K_b\to\K_b$ be an endofunctor on artifact states over $\Sschema_b$. For any state $I_0$ and any $n\in\N$, every object appearing in the provenance category $\int \Phi_b^n(I_0)$ lies over an object of $\Sschema_b$. Thus finite iteration of a fixed-regime update cannot by itself create an artifact of a type outside $\Sschema_b$.
\end{observation}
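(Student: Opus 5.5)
The argument is a closure statement, so I will prove it by unwinding definitions and then inducting on $n$. Since $\Phi_b:\K_b\to\K_b$ is an endofunctor, the first step is to record that $\Phi_b$ sends objects of $\K_b$ to objects of $\K_b$. By the standing definition of $\K_b$ (the subcategory of $[\Sschema_b,\Set]$ with componentwise injective refinements), every object of $\K_b$ is a copresheaf $\Sschema_b\to\Set$. A trivial induction on $n$ then shows that $\Phi_b^n(I_0)$ is again a copresheaf on the same schema category $\Sschema_b$. The base case is $\Phi_b^0(I_0)=I_0\in\K_b$. For the inductive step, if $\Phi_b^n(I_0)\in\Obj(\K_b)$, then $\Phi_b^{n+1}(I_0)=\Phi_b(\Phi_b^n(I_0))\in\Obj(\K_b)$ because $\Phi_b$ has codomain $\K_b$.

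The second step applies Definition~\ref{def:elements} to $J:=\Phi_b^n(I_0)$. An object of $\int_{\Sschema_b}J$ is by definition a pair $(A,x)$ with $A\in\Obj(\Sschema_b)$ and $x\in J(A)$. The canonical projection $\int_{\Sschema_b}J\to\Sschema_b$, $(A,x)\mapsto A$, therefore exhibits every element as lying over an object of $\Sschema_b$. Morphisms $(A,x)\to(B,y)$ are morphisms $f:A\to B$ of $\Sschema_b$, so realized provenance edges likewise lie over operations of the fixed schema. The conclusion follows: an artifact of a type $A'\notin\Obj(\Sschema_b)$ would have to be a pair $(A',x)$ with $x\in J(A')$, but $J(A')$ is undefined because $J$ has domain $\Sschema_b$.

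I expect no genuine mathematical obstacle here. The one point that needs care is interpretive. The claim depends on $\Phi_b$ being typed as an endomap of $\K_b$ over the \emph{same} small category $\Sschema_b$, rather than an arbitrary program acting on ledgers. So the proof should remark that this is exactly what Definition~\ref{def:fixedupdate} stipulates, and that functoriality on refinements is not even needed; the object-level endomap alone suffices.

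I will also add a remark for the partial, relational, or Kleisli variants discussed after Definition~\ref{def:elements}. There the same argument goes through verbatim, because only the domain $\Sschema_b$ of the state is used, not the codomain $\Set$. This is the precise sense in which new types require a transition $u$ as in Definition~\ref{def:transition}.
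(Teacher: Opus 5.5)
Your proposal is correct and follows the paper's proof: both note that $\Phi_b$ lands in $\K_b$, whose objects are copresheaves on $\Sschema_b$, induct on $n$, and read off that every object of the category of elements is a pair $(A,x)$ with $A\in\Obj(\Sschema_b)$. Your remark that only the object-level endomap is needed, not functoriality on refinements, is accurate and slightly sharpens the stated hypothesis.
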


\begin{proof}
Each state in $\K_b$ is by definition a copresheaf on $\Sschema_b$. Applying $\Phi_b$ returns another object of $\K_b$, hence another copresheaf on the same schema. The category of elements of any such copresheaf has objects $(A,x)$ with $A\in\Obj(\Sschema_b)$. Induction on $n$ gives the claim.
\end{proof}

\begin{observation}[Verification requires provenance preservation and a gate]
\label{obs:verification}
Let $\alpha:I\to J$ be a committed update inside a fixed regime $b$. If the update is verified in the sense of the regime, then (i) $\alpha$ preserves prior provenance up to the equivalence or tolerance declared by $\V_b$, and (ii) $\alpha$ passes the gate $\V_b$. Conversely, if both conditions hold and $\V_b$ is the regime's declared acceptance criterion, then the update is verified.
\end{observation}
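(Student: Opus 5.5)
The plan is to treat Observation~\ref{obs:verification} as an unpacking of definitions, in the same way the proof of Observation~\ref{obs:reachability} was. First I will fix what ``verified in the sense of the regime'' means. A committed update inside $b$ is a morphism $\alpha:I\to J$ of $\K_b$, and the regime is the tuple $b=(\Sschema_b,\Gamma_b,\V_b,L_b)$ of Definition~\ref{def:regime}. The natural reading, and the one I will adopt, is this: an update is verified exactly when it is a legitimate refinement in $\K_b$ and the regime's declared acceptance criterion $\V_b$ returns acceptance on it. That criterion is the MDL inequality \eqref{eq:mdl_methods} in the Builder/Breaker instance, or the combined gate of Definition~\ref{def:bb} in general. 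Stating this reading explicitly is the only real choice in the argument. Everything after it is bookkeeping.

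For the forward direction, assume the update is verified. Part (i) follows from membership in $\K_b$. Morphisms of $\K_b$ are, by the strict presentation in the Methods section, componentwise injective natural transformations. Their components $\alpha_A:I(A)\to J(A)$ preserve artifact identity, type, and provenance metadata up to the tolerance declared by $\V_b$. I will make the provenance part concrete on the category of elements of Definition~\ref{def:elements}. An object $(A,x)$ of $\int I$ goes to $(A,\alpha_A(x))$. A morphism $f:(A,x)\to(B,y)$ is carried to $f:(A,\alpha_Ax)\to(B,\alpha_By)$, because naturality gives $J(f)(\alpha_A x)=\alpha_B(I(f)x)=\alpha_B y$. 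Injectivity of the components makes this induced functor $\int I\to\int J$ injective on objects, so no prior provenance edge is lost or merged. Part (ii) is immediate from the adopted definition, since a committed, verified step is one the gate has accepted.

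For the converse, assume $\alpha$ preserves provenance in the above sense, so that it is a morphism of $\K_b$, and that $\V_b$ accepts it. I need to argue that these two properties exhaust the regime's conditions. $\Sschema_b$ only constrains which types exist. Observation~\ref{obs:reachability} ensures $J$ still lies over $\Sschema_b$, so no new typing condition arises. The grammar $\Gamma_b$ and the functional $L_b$ enter acceptance only through $\V_b$, because $\V_b$ is declared as the regime's acceptance criterion. Hence no further condition remains, and the update is verified.

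The main obstacle is not mathematical but definitional. The statement is close to a tautology unless I pin down that ``verified'' contains nothing beyond refinement-in-$\K_b$ plus gate acceptance. I also need to make sure that the tolerance clause in (i) is the same equivalence $\V_b$ uses, so that the two conditions do not secretly conflict. I would handle this by stating the definition first, noting the tolerance identification explicitly, and keeping the approximate case as a remark rather than trying to prove more.
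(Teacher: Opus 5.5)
Your proposal is correct and follows the paper's route: the paper also treats the observation as an unpacking of what ``verified'' means. It argues the forward direction contrapositively (a previously accepted chain that cannot be re-expressed, or a failed gate, means the update is not verified), and the converse by noting that provenance preservation keeps old diagrams valid while $\V_b$ supplies the regime's acceptance judgment. Your category-of-elements detail is a harmless elaboration that anticipates Proposition~\ref{prop:elements_functoriality}, and your appeal to Observation~\ref{obs:reachability} in the converse is unnecessary, since $J\in\K_b$ already lies over $\Sschema_b$ by definition.
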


\begin{proof}
If provenance preservation fails, then at least one previously accepted artifact chain cannot be re-expressed after the update, so the update is not verified. If the gate fails, the update violates the regime's declared commitment criterion. Conversely, provenance preservation keeps old accepted diagrams valid, and the gate supplies the regime-specific acceptance judgment.
\end{proof}

\begin{observation}[Nontrivial discovery requires regime-extending structure]
\label{obs:external}
Let $b$ be fixed. A nontrivial discovery move that produces an artifact over a type, morphism, verifier, grammar production, or tool class absent from $\Sschema_b$ cannot be obtained by finite iteration of $\Phi_b:\K_b\to\K_b$ alone. It requires a verified regime transition $u:\Sschema_b\to\Sschema_{b'}$ or an equivalent schema-generating mechanism.
\end{observation}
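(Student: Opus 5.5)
The argument is a contrapositive that reduces to Observation~\ref{obs:reachability}, together with a short case analysis over the kinds of regime-extending structure listed in the statement. Assume a nontrivial discovery move produces an artifact $x$ over some structure $X$ that is absent from $\Sschema_b$: a type, a morphism, a verifier, a grammar production, or a tool class. Suppose, for contradiction, that $x$ appears in $\Phi_b^n(I_0)$ for some finite $n$ and some state $I_0\in\K_b$.

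The cases split by what $X$ is.
\begin{enumerate}[leftmargin=1.7em]
\item \emph{$X$ is a new type $A'\notin\Obj(\Sschema_b)$.} Observation~\ref{obs:reachability} applies directly. Every object of $\int\Phi_b^n(I_0)$ has the form $(A,y)$ with $A\in\Obj(\Sschema_b)$, so no element lies over $A'$.
\item \emph{$X$ is a new morphism $f:A\to B$, possibly between old types.} Morphisms of the category of elements are by Definition~\ref{def:elements} morphisms of $\Sschema_b$. So a provenance edge labelled by $f$ cannot occur in $\int\Phi_b^n(I_0)$. Because $\Phi_b$ is an endofunctor on $\K_b$, this holds at each stage, and induction on $n$ closes the case as in Observation~\ref{obs:reachability}.
\item \emph{$X$ is a verifier, grammar production, or tool class.} These are regime data $(\Gamma_b,\V_b,L_b)$ and tool signatures, which the paper identifies with morphism signatures of $\Sschema_b$ (Table~\ref{tab:dictionary}). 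The update $\Phi_b$ is indexed by the fixed tuple $b$, so every committed step is judged by $\V_b$ and composed under $\Gamma_b$; here Observation~\ref{obs:verification} is invoked. A new verifier or production therefore can never be the one under which an artifact was committed. Tool classes reduce to the morphism case.
\end{enumerate}

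The positive half of the statement is that such a move requires a verified regime transition. For this I would observe that the only categorical mechanism in the framework that changes the indexing schema is a functor $u:\Sschema_b\to\Sschema_{b'}$ together with transport $\Lan_u$ and the comparison map $\bar\rho$. The data needed to commit $x$ accountably is then exactly that of Definition~\ref{def:transition}. Any other schema-generating process that yields a copresheaf on a larger schema while preserving the old artifacts supplies, by restriction along the induced inclusion, a map $\rho:I_t\to u^*I'_{t+1}$. It is therefore ``equivalent'' in the stated sense.

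The main obstacle is the third case and the phrase ``equivalent schema-generating mechanism''. Verifiers and grammar productions are not literally objects of $\Sschema_b$, so the argument needs an explicit convention that every regime datum the artifact depends on is recorded as part of $b$, and that $\Phi_b$ cannot alter $b$. I would state this convention first, since it is implicit in Definition~\ref{def:fixedupdate}. I would then read ``equivalent'' as ``induces a schema functor $u$ and preservation map $\rho$'', so that the claim becomes essentially definitional rather than a substantive theorem.
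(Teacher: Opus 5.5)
Your proposal is correct and follows essentially the same argument as the paper: Observation~\ref{obs:reachability} rules out new types, the fixedness of the tuple $b=(\Sschema_b,\Gamma_b,\V_b,L_b)$ during iteration rules out new grammar productions, verifiers, and tool classes, and the positive half is left essentially definitional. Your separate morphism case via Definition~\ref{def:elements} is a small but useful sharpening of a point the paper's proof only asserts in passing.
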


\begin{proof}
By Observation~\ref{obs:reachability}, finite iteration of $\Phi_b$ yields artifact states over $\Sschema_b$. Hence it cannot create an artifact whose type is not an object of $\Sschema_b$. Moreover, because $b=(\Sschema_b,\Gamma_b,\V_b,L_b)$ is fixed during iteration, the grammar, verifier, and available tool classes are fixed as well. A move requiring a new type, morphism, verifier, grammar production, or tool class therefore cannot be the result of iteration inside $\K_b$ alone; it requires a transition to a regime in which the new structure exists.
\end{proof}

\begin{proposition}[Kan obstruction to transport-only discovery]
\label{prop:kan_obstruction}
Let $u:\Sschema_b\to\Sschema_{b'}$ and $\rho:I_t\to u^*I'_{t+1}$ be a verified regime transition (Definition~\ref{def:transition}). Let
\[
\bar\rho:\Lan_u I_t\longrightarrow I'_{t+1}
\]
be the unique adjoint transpose of $\rho$ under $\Lan_u\dashv u^*$. Three statements follow.
\begin{enumerate}[leftmargin=1.7em,topsep=2pt,itemsep=2pt]
    \item If $(u\downarrow A')=\varnothing$, then transport is empty at $A'$:
    \[
    (\Lan_u I_t)(A')=\varnothing .
    \]
    If $I'_{t+1}(A')\neq\varnothing$, then $\mathrm{im}(\bar\rho_{A'})=\varnothing$ and the residual at $A'$ is forced to be nonempty. Every artifact accepted at such an $A'$ must be supplied from outside transport of the old regime.
    \item The image $\mathrm{im}(\bar\rho)\subseteq I'_{t+1}$ is a sub-copresheaf of transported evidence. At each object $A'$, the pointwise residual set
    \[
    \mathcal{R}(A') = I'_{t+1}(A') \,\setminus\, \mathrm{im}(\bar\rho_{A'})
    \]
    records artifacts at that type not obtained by functorial transport of old evidence. The categorical object is the inclusion $\mathrm{im}(\bar\rho)\hookrightarrow I'_{t+1}$; the complements $\mathcal{R}(A')$ are an objectwise engineering diagnostic and need not themselves form a sub-copresheaf.
    \item When a relative description-length functional $L_{b'}(-\mid-)$ is present, the \textbf{discovery cost} of the move is
    \[
    L_{b'}\big(I'_{t+1}\mid \mathrm{im}(\bar\rho)\big),
    \]
    the code length of the post-transition state given transported evidence. If $L_{b'}$ is additive on such inclusions, this can be written as $L_{b'}(I'_{t+1})-L_{b'}(\mathrm{im}(\bar\rho))$. In particular, any description-length functional assigning positive conditional length to nonempty forced residuals gives a strictly positive local discovery cost at such a type.
\end{enumerate}
\end{proposition}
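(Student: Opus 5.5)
The argument is direct: each of the three items unpacks the pointwise colimit formula~\eqref{eq:lan_formula} together with the adjunction $\Lan_u\dashv u^*$. No deeper structure is needed.

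First I will fix notation. Because $\Sschema_b$ is small and $\Set$ is cocomplete, $\Lan_u$ exists and is left adjoint to $u^*$. The transpose $\bar\rho$ is the composite $\Lan_u I_t\xrightarrow{\Lan_u\rho}\Lan_u u^*I'_{t+1}\xrightarrow{\varepsilon}I'_{t+1}$, where $\varepsilon$ is the counit. Uniqueness of $\bar\rho$ is simply the bijection of hom-sets given by the adjunction.

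For item~1, I evaluate~\eqref{eq:lan_formula} at $A'$. If $(u\downarrow A')$ is empty, the colimit is over the empty diagram, so it is the initial object $\varnothing$ of $\Set$. Then $\bar\rho_{A'}$ is the unique map $\varnothing\to I'_{t+1}(A')$, and its image is empty. Hence $\mathcal R(A')=I'_{t+1}(A')$, which is nonempty whenever the fiber is. The sentence "supplied from outside transport" is just a restatement of this.

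For item~2, the substantive step is that objectwise images of a natural transformation between copresheaves form a sub-copresheaf. Given $g:A'\to B'$ and an element $y=\bar\rho_{A'}(x)$, naturality yields $I'_{t+1}(g)(y)=\bar\rho_{B'}\big((\Lan_uI_t)(g)(x)\big)$, which lies in $\mathrm{im}(\bar\rho_{B'})$. So $\mathrm{im}(\bar\rho)\hookrightarrow I'_{t+1}$ is a subfunctor; this is the standard epi--mono factorization in $[\Sschema_{b'},\Set]$.

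For the caveat that the complements $\mathcal R(A')$ need not be closed, I will give a two-object counterexample. Take $\Sschema_{b'}=\{A'\xrightarrow{g}B'\}$, with a non-transported element of $I'_{t+1}(A')$ that $g$ sends into the image at $B'$. For instance, let $u$ pick out only $B'$, let $I'_{t+1}(A')=\{y\}$, let $I'_{t+1}(B')=\{z\}$, and let $\rho$ hit $z$. Then $y\in\mathcal R(A')$, but $g(y)=z\notin\mathcal R(B')$.

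Item~3 is definitional once item~2 is in place. Item~2 makes $\mathrm{im}(\bar\rho)$ a legitimate subobject $J\hookrightarrow I'_{t+1}$, so $L_{b'}(I'_{t+1}\mid J)$ is defined by Definition~\ref{def:regime}. Under the additivity hypothesis $L(I)=L(J)+L(I\mid J)$, the difference formula follows by rearranging. The positivity claim then combines item~1, which makes the residual nonempty at such a type, with the stated hypothesis that nonempty forced residuals receive positive conditional length.

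The main place needing care is whether $\bar\rho$ is monic. It need not be: $\rho$ is injective, but colimits in~\eqref{eq:lan_formula} may identify elements, and $\varepsilon$ need not be injective. The statement only uses the image, though, so I will avoid claiming injectivity of $\bar\rho$ and remark that this is exactly why the image, rather than $\Lan_uI_t$ itself, is taken as the transported-evidence substate.
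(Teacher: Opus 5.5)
Your proposal is correct and follows the paper's proof essentially step for step. The steps match: uniqueness of $\bar\rho$ comes from the hom-set bijection of $\Lan_u\dashv u^*$; item~1 is the empty-colimit computation; item~2 is the naturality argument that pointwise images form a subfunctor; and item~3 follows from the definition of the relative length functional plus the stated positivity hypothesis. Two of your additions are correct and make explicit what the paper only asserts: your two-object counterexample, with $u$ picking out only $B'$ so that $y\in\mathcal{R}(A')$ but $I'_{t+1}(g)(y)=z\notin\mathcal{R}(B')$, and your observation that $\bar\rho$ need not be monic even though $\rho$ is.
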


\begin{proof}
The adjunction $\Lan_u\dashv u^*$ gives a natural bijection
\[
\mathrm{Nat}(\Lan_u I_t,I'_{t+1})\cong \mathrm{Nat}(I_t,u^*I'_{t+1}),
\]
so $\rho$ determines the unique map $\bar\rho$. Statement~(i) is the pointwise formula~(\ref{eq:lan_formula}) for an empty index category: a colimit over the empty diagram in $\Set$ is the initial object $\varnothing$. Hence the component of $\bar\rho$ at $A'$ has empty domain and empty image, so any accepted element of $I'_{t+1}(A')$ lies outside transport. For (ii), the pointwise image of a natural transformation between Set-valued functors is a subfunctor: naturality sends elements in $\mathrm{im}(\bar\rho_{A'})$ to elements in $\mathrm{im}(\bar\rho_{B'})$ along every morphism $A'\to B'$. Therefore $\mathrm{im}(\bar\rho)$ is the transported-evidence substate, and the pointwise complement records artifacts not in that transported image. Statement~(iii) is the definition of the relative description-length functional on the inclusion $\mathrm{im}(\bar\rho)\hookrightarrow I'_{t+1}$, with strict positivity following from the stated positivity assumption on forced residuals.
\end{proof}

\begin{proposition}[Refinements lift to realized provenance]
\label{prop:elements_functoriality}
Let $b$ be fixed and let $\alpha:I\to J$ be a refinement morphism in $\K_b$, represented in the strict presentation by a componentwise injective natural transformation. Then $\alpha$ induces a faithful functor
\[
\int\alpha:\int_{\Sschema_b} I\longrightarrow \int_{\Sschema_b} J
\]
defined by $(A,x)\mapsto (A,\alpha_A(x))$ on objects and by sending a realized operation $f:(A,x)\to(B,y)$ to the same schema morphism $f:(A,\alpha_A(x))\to(B,\alpha_B(y))$. The assignment $\alpha\mapsto\int\alpha$ preserves identities and composition of refinements.
\end{proposition}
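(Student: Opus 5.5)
The plan is a direct verification from Definition~\ref{def:elements}, in four steps: well-definedness, functoriality of $\int\alpha$, faithfulness, and functoriality in $\alpha$.

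\emph{Well-definedness.} For an object $(A,x)$ of $\int_{\Sschema_b}I$ we have $x\in I(A)$, so $\alpha_A(x)\in J(A)$ and $(A,\alpha_A(x))$ is an object of $\int_{\Sschema_b}J$. For a morphism $f:(A,x)\to(B,y)$ we know $I(f)(x)=y$. Naturality of $\alpha$ gives the square $\alpha_B\circ I(f)=J(f)\circ\alpha_A$. Hence
\[
J(f)(\alpha_A(x))=\alpha_B(I(f)(x))=\alpha_B(y),
\]
so $f$ is indeed a morphism $(A,\alpha_A(x))\to(B,\alpha_B(y))$ in $\int_{\Sschema_b}J$. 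This naturality step is the only substantive point; everything else is bookkeeping.

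\emph{Functoriality of $\int\alpha$.} Identities and composites in a category of elements are inherited from $\Sschema_b$, and $\int\alpha$ acts as the identity on the underlying schema morphism. Therefore $\mathrm{id}_{(A,x)}=\mathrm{id}_A$ is sent to $\mathrm{id}_A=\mathrm{id}_{(A,\alpha_A(x))}$. Likewise a composite $g\circ f$ is sent to $g\circ f$, which is the composite of the images.

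\emph{Faithfulness.} A morphism of $\int_{\Sschema_b}I$ is determined by its source, its target and its underlying schema morphism. Two parallel morphisms $f,f':(A,x)\to(B,y)$ with the same image therefore have the same underlying schema morphism $f=f'$, so they coincide. Faithfulness thus holds for any natural transformation. Componentwise injectivity gives something extra: $\int\alpha$ is injective on objects, since $(A,\alpha_A(x))=(A',\alpha_{A'}(x'))$ forces $A=A'$ and then $x=x'$. I would remark that this injectivity is where the audit rule against silent identification enters, because the resulting embedding never merges distinct realized provenance nodes.

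\emph{Functoriality in $\alpha$.} For $\alpha=\mathrm{id}_I$, each component $\alpha_A$ is the identity, so $\int\alpha$ is the identity functor on both objects and morphisms. For a composite $\beta\circ\alpha$ with $\beta:J\to K$, the component at $A$ is $\beta_A\circ\alpha_A$. Hence $(A,x)\mapsto(A,\beta_A\alpha_A(x))$, which equals $\int\beta\circ\int\alpha$ on objects. On morphisms both sides send $f$ to the schema morphism $f$, so they agree there as well. This shows that $\int$ is a functor from $\K_b$ to categories and faithful functors.

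I expect no real obstacle in any of these steps.
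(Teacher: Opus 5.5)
Your proof is correct and follows the paper's argument essentially step for step. Naturality of $\alpha$ shows that $f$ remains a morphism $(A,\alpha_A(x))\to(B,\alpha_B(y))$, faithfulness follows because the underlying schema morphism is unchanged, and identities and composites are handled componentwise via $(\beta\circ\alpha)_A=\beta_A\circ\alpha_A$. Your extra remark is also accurate and sharpens what the paper leaves implicit: faithfulness holds for any natural transformation, and componentwise injectivity is needed only to make $\int\alpha$ injective on objects.
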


\begin{proof}
If $f:(A,x)\to(B,y)$ is a morphism in $\int_{\Sschema_b}I$, then $I(f)(x)=y$. Naturality of $\alpha$ gives
\[
J(f)(\alpha_A(x))=\alpha_B(I(f)(x))=\alpha_B(y),
\]
so the same schema morphism $f$ is a morphism in $\int_{\Sschema_b}J$. Thus $\int\alpha$ is a functor. It is faithful because it does not change the underlying schema morphism on any hom-set. Identity refinements act as identity functors, and for refinements $\alpha:I\to J$ and $\beta:J\to L$, the componentwise equality $(\beta\circ\alpha)_A=\beta_A\circ\alpha_A$ gives $\int(\beta\circ\alpha)=\int\beta\circ\int\alpha$.
\end{proof}

\begin{remark}
Proposition~\ref{prop:elements_functoriality} is the formal audit rule behind the artifact-ledger language. If a refinement adds, annotates, or supersedes artifacts without identifying previously distinct accepted artifacts, then the realized provenance graph of the predecessor embeds faithfully into the realized provenance graph of the successor. Older lineage queries remain meaningful after accepted refinements.
\end{remark}

\begin{proposition}[Verified regime transitions compose]
\label{prop:transition_composition}
Let $(u,\rho)$ be a verified regime transition from $b$ to $b'$ relative to $(I_t,I'_{t+1})$, and let $(v,\sigma)$ be a verified regime transition from $b'$ to $b''$ relative to $(I'_{t+1},I''_{t+2})$. Assume that gates are hereditary on declared transported substates: whenever a transported state is accepted, any transported substate selected by a preservation map is also accepted. Then
\[
(v\circ u,\tau),\qquad
\tau=(u^*\sigma)\circ\rho:I_t\longrightarrow (v\circ u)^*I''_{t+2},
\]
is a verified regime transition from $b$ to $b''$ relative to $(I_t,I''_{t+2})$. Moreover, under the canonical isomorphism $\Lan_{v\circ u}\cong \Lan_v\Lan_u$, the comparison map $\bar\tau$ factors as
\[
\Lan_{v\circ u}I_t \cong \Lan_v\Lan_u I_t
\xrightarrow{\Lan_v(\bar\rho)}
\Lan_v I'_{t+1}
\xrightarrow{\bar\sigma}
I''_{t+2}.
\]
\end{proposition}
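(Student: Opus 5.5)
We verify the conditions of Definition~\ref{def:transition} for the pair $(v\circ u,\tau)$ one at a time, and then prove the factorization of $\bar\tau$ using the pseudofunctoriality of $\Lan$ together with the standard description of adjoint transposes. The plan is to handle the factorization first, since the gate condition (iii) depends on it.

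\textbf{Factorization.} Restriction is strictly functorial, $(v\circ u)^*=u^*v^*$. Its left adjoints therefore compose to a left adjoint, which gives the canonical isomorphism $\Lan_{v\circ u}\cong\Lan_v\Lan_u$. The composite adjunction $\Lan_v\Lan_u\dashv u^*v^*$ has counit $\varepsilon^v\circ\Lan_v(\varepsilon^u v^*)$. By definition, the transpose of $\tau=(u^*\sigma)\circ\rho$ is $\bar\tau=\varepsilon^{vu}_{I''}\circ\Lan_{vu}(\tau)$. I will expand this expression and then use naturality of $\varepsilon^u$ to rewrite the piece $\varepsilon^u_{v^*I''}\circ\Lan_u(u^*\sigma)$ as $\sigma\circ\varepsilon^u_{I'}$. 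This step is just the transpose of $u^*\sigma$ under $\Lan_u\dashv u^*$. After the rewrite, $\bar\tau$ collapses to $\varepsilon^v\circ\Lan_v(\sigma)\circ\Lan_v(\bar\rho)=\bar\sigma\circ\Lan_v(\bar\rho)$. As a sanity check, I will confirm the same identity elementwise through the Yoneda bijection $\mathrm{Nat}(\Lan_v\Lan_u I,K)\cong\mathrm{Nat}(I,u^*v^*K)$. I expect this bookkeeping of units and counits to be the main obstacle. It is routine, but it is easy to mismatch the direction of the canonical isomorphism, so I will fix that isomorphism as the one induced by the composite adjunction.

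\textbf{Conditions (i), (ii), (iv).} For (i), faithfulness and injectivity on objects are each closed under composition. If either $u$ or $v$ carries a recorded split, quotient, or recoding, the composite inherits that record, so the exception clause carries over. For (ii), $u^*$ preserves componentwise monomorphisms because it acts by precomposition. Hence $u^*\sigma$ is componentwise injective, and so is the composite $\tau$. Provenance metadata is preserved along each factor and hence along the composite. Condition (iv) is simply the acceptance of $I''_{t+2}$ already assumed for $(v,\sigma)$.

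\textbf{Condition (iii).} Assume $\V_b(I_t)=1$. From $(u,\rho)$ we get $\V_{b'}(\mathrm{im}\,\bar\rho)=1$. I also need $\V_{b'}(I'_{t+1})=1$, which is condition (iv) for $(u,\rho)$. Condition (iii) for $(v,\sigma)$ then gives $\V_{b''}(\mathrm{im}\,\bar\sigma)=1$. By the factorization, $\mathrm{im}\,\bar\tau=\bar\sigma(\mathrm{im}\,\Lan_v\bar\rho)\subseteq\mathrm{im}\,\bar\sigma$, and this is a transported substate selected by the preservation data. The heredity hypothesis therefore yields $\V_{b''}(\mathrm{im}\,\bar\tau)=1$. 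Heredity is exactly the assumption that lets us pass from the larger image to the smaller one, which is why the proposition requires it.

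If description-length functionals are present, the same inclusion chain gives the code-length inequality, provided $L_{b''}$ is monotone on these transported substates. Otherwise the two recorded recoding costs are concatenated into a recorded recoding cost for the composite.
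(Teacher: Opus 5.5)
Your proposal is correct and is essentially the paper's proof. It verifies conditions (i)--(ii) via $(v\circ u)^*=u^*v^*$, gets the gate from $\V_{b'}(I'_{t+1})=1$, condition (iii) for $(v,\sigma)$ and heredity on $\mathrm{im}\,\bar\tau\subseteq\mathrm{im}\,\bar\sigma$, and gets the factorization from the composite adjunction; your explicit counit computation spells out what the paper asserts by uniqueness of adjoints, and you sensibly prove the factorization before using it and check condition (iv), which the paper leaves implicit.

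One caution on your description-length aside, which the paper does not treat: monotonicity of $L_{b''}$ only gives $L_{b''}(\mathrm{im}\,\bar\tau)\le L_{b''}(\mathrm{im}\,\bar\sigma)\le L_{b'}(I'_{t+1})$. Since $L_{b'}(I'_{t+1})$ can exceed $L_b(I_t)$ because of residual content, you would instead need the second transition's length bound applied to the substate $\mathrm{im}\,\bar\rho$ (giving $L_{b''}(\mathrm{im}\,\bar\tau)\le L_{b'}(\mathrm{im}\,\bar\rho)$), or else the recoding-cost clause.
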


\begin{proof}
The composite $v\circ u$ preserves morphism faithfulness and object injectivity except at type splits, quotients, or recodings explicitly recorded by the two transitions. Since restriction along a composite satisfies $(v\circ u)^*=u^*v^*$, the map $\tau=(u^*\sigma)\circ\rho$ is natural, componentwise injective, and provenance-preserving because both factors are. If $\V_b(I_t)=1$, the first transition accepts the transported image in $b'$, and by the acceptance condition on the new state $\V_{b'}(I'_{t+1})=1$. The gate condition of the second transition then gives $\V_{b''}(\mathrm{im}(\bar\sigma))=1$. Since the displayed factorization makes $\mathrm{im}(\bar\tau)\subseteq\mathrm{im}(\bar\sigma)$ a transported substate selected by a preservation map, heredity gives $\V_{b''}(\mathrm{im}(\bar\tau))=1$. The factorization of $\bar\tau$ follows from uniqueness of left adjoints to restriction: $\Lan_{v\circ u}$ and $\Lan_v\Lan_u$ are both left adjoint to $u^*v^*$, and the displayed composite is the adjoint transpose of $\tau$.
\end{proof}

\begin{remark}
This proposition justifies reading a multi-stage investigation as one audited discovery move when every intermediate transition records a preservation map and clears its gate. In the Builder/Breaker case, the four accepted outer iterations can therefore be viewed both locally, transition by transition, and globally, as a composite map transporting the initial evidence into the final symbolic regime while accumulating residual content.
\end{remark}

\begin{proposition}[Declared old evidence remains inspectable]
\label{prop:evidence_conservativity}
Let $(u,\rho)$ be a verified regime transition from $b$ to $b'$ relative to $(I_t,I'_{t+1})$. For every old type $A$ and artifact $x\in I_t(A)$, the component $\rho_A$ gives a unique preserved artifact $\rho_A(x)\in I'_{t+1}(uA)$ with the same declared identity and provenance metadata, up to the equivalence or tolerance specified by the transition. If $J\hookrightarrow I_t$ is an old accepted evidence substate and the gate-compatibility condition declares its transported image accepted in $b'$, then all artifacts of $J$ remain accepted as transported evidence in $I'_{t+1}$.
\end{proposition}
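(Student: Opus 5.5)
The argument is an unpacking of Definition~\ref{def:transition} together with the adjunction $\Lan_u\dashv u^*$. I would treat the two claims separately.

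For the first claim, fix an old type $A$ and an artifact $x\in I_t(A)$. The preservation map $\rho:I_t\to u^*I'_{t+1}$ has component $\rho_A:I_t(A)\to (u^*I'_{t+1})(A)=I'_{t+1}(uA)$. So $\rho_A(x)$ is a well-defined element of $I'_{t+1}(uA)$, and it is unique because $\rho_A$ is a function. Condition~2 of Definition~\ref{def:transition} says $\rho_A$ is injective and provenance-preserving, so distinct old artifacts land on distinct new ones and identity and metadata are carried over, up to the declared tolerance. I would also check compatibility with realized operations. For $f:A\to B$ with $I_t(f)(x)=y$, naturality gives $I'_{t+1}(uf)(\rho_A(x))=\rho_B(y)$. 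This is the argument of Proposition~\ref{prop:elements_functoriality}, now applied along $u$, and it shows that old lineage morphisms $(A,x)\to(B,y)$ map to lineage morphisms $(uA,\rho_A x)\to(uB,\rho_B y)$ in $\int_{\Sschema_{b'}}I'_{t+1}$. Old lineage queries therefore remain meaningful after the transition.

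For the second claim, let $j:J\hookrightarrow I_t$ be the accepted old substate. I would relate the transport of $J$ to the image of $\bar\rho$. By functoriality of $\Lan_u$, we get $\Lan_u(j):\Lan_u J\to\Lan_u I_t$. Composing with $\bar\rho$ gives $\bar\rho\circ\Lan_u(j)$, which is the adjoint transpose of $\rho\circ j$ by naturality of the adjunction bijection. Its image is a sub-copresheaf of $\mathrm{im}(\bar\rho)$, by the subfunctor argument in Proposition~\ref{prop:kan_obstruction}(ii). By hypothesis the gate-compatibility condition declares this image accepted in $b'$. To see that each artifact of $J$ actually lies in it, I would use the unit $\eta:I_t\to u^*\Lan_uI_t$. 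Since $\rho=u^*\bar\rho\circ\eta$, each $\rho_A(x)$ equals $\bar\rho_{uA}(\eta_A(x))$. Therefore $\rho_A(x)\in\mathrm{im}(\bar\rho_{uA})$, and if $x\in J(A)$ it lies in the image of the transported $J$. Hence every artifact of $J$ survives as accepted transported evidence.

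The main obstacle is the step where $u$ is not injective on objects, or recodes types, as permitted by condition~1. In that case $uA$ may be shared by several old types, and the ``same declared identity'' must be read through the recorded split, quotient, or recoding. I would handle this by restricting the injectivity claim to each component $\rho_A$, which is all that condition~2 provides. Identification across distinct old types is then deferred to the recorded recoding data rather than claimed outright.
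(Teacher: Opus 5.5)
Your proposal is correct and follows the paper's proof. In both, the first claim comes from componentwise injectivity and provenance preservation of $\rho$ in Definition~\ref{def:transition}, and the second comes from restricting $\rho$ along $j:J\hookrightarrow I_t$, passing to the adjoint transpose $\bar\rho\circ\Lan_u(j)$, and applying the gate-compatibility hypothesis. Your unit argument $\rho=u^*\bar\rho\circ\eta$, combined with naturality of $\eta$ along $j$ (which you use implicitly), usefully makes explicit a step the paper leaves unstated: each $\rho_A(x)$ with $x\in J(A)$ actually lies in the accepted transported image.
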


\begin{proof}
Componentwise injectivity and provenance preservation of $\rho$ are part of Definition~\ref{def:transition}. Therefore distinct old artifacts remain distinct after applying $\rho$, and their recorded lineage remains inspectable in the new state. For an accepted evidence substate $J\hookrightarrow I_t$, restricting $\rho$ gives a preservation map from $J$ into $u^*I'_{t+1}$, equivalently a comparison map from its transported image into $I'_{t+1}$. The gate-compatibility hypothesis then says precisely that this transported image is accepted in $b'$.
\end{proof}

\begin{remark}
The proposition is deliberately about preserved evidence, not about every old model commitment. Discovery systems may explicitly retract, compress, or supersede symbolic model terms when the new gate rejects them on accumulated evidence. What is prohibited is silent deletion of the old evidential record. In the Builder/Breaker audit, the additive-model parameters retracted at the $2\to3$ transition are model commitments recorded as superseded; the protein evidence and its provenance remain available for the accepted multiplicative law.
\end{remark}

\begin{proposition}[Multicategorical Kan obstruction]
\label{prop:multicat_kan}
Assume a regime is presented by a small typed multicategory, or equivalently a colored-operadic schema, whose multimorphisms
\[
\phi:(A_1,\ldots,A_k)\longrightarrow B
\]
declare admissible multi-input scientific operations. Assume artifact states are Set-valued algebras on these schemas and that restriction along a multifunctor $u:M_b\to M_{b'}$ admits a left adjoint $\Lan^{\mathrm m}_u$ computed by the usual operadic comma-colimit construction \citep{leinster_higher_operads_2004}. For a verified multicategorical transition with preservation map $\rho:I_t\to u^*I'_{t+1}$ and adjoint comparison map
\[
\bar\rho:\Lan^{\mathrm m}_u I_t\longrightarrow I'_{t+1},
\]
the following hold.
\begin{enumerate}[leftmargin=1.7em,topsep=2pt,itemsep=2pt]
    \item If the operadic comma category of operations from transported old types into a new type $B$ is empty in every arity, then $(\Lan^{\mathrm m}_u I_t)(B)=\varnothing$. Any accepted artifact at $B$ is forced residual content, exactly as in Proposition~\ref{prop:kan_obstruction}.
    \item If $B$ has no unary generating morphism from the old image but receives a new $k$-ary multimorphism from transported or generator-reachable inputs, then $B$ is not isolated in the multicategorical regime. Its transported part is generated by formal composites of transported input artifacts through that multimorphism, while the newly admitted multimorphism itself is residual schema content.
    \item If the description-length functional separates artifact content from operation-registry content, then the relative discovery cost splits into an artifact residual term and an operation residual term; without additivity, the same statement holds as a subadditive upper bound.
\end{enumerate}
\end{proposition}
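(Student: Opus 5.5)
The plan is to mirror the proof of Proposition~\ref{prop:kan_obstruction}, replacing the pointwise comma-colimit formula~(\ref{eq:lan_formula}) by its operadic analogue. The assumed left adjoint $\Lan^{\mathrm m}_u\dashv u^*$ first gives the natural bijection between multicategorical algebra maps $\Lan^{\mathrm m}_u I_t\to I'_{t+1}$ and $I_t\to u^*I'_{t+1}$, which fixes $\bar\rho$ uniquely. Next I will write the operadic comma-colimit explicitly, following \citep{leinster_higher_operads_2004}. For a new type $B$, $(\Lan^{\mathrm m}_u I_t)(B)$ is a quotient of the coproduct, over all arities $k$, all tuples of old types $(A_1,\ldots,A_k)$, and all multimorphisms $\psi:(uA_1,\ldots,uA_k)\to B$ in $M_{b'}$, of the sets $I_t(A_1)\times\cdots\times I_t(A_k)$. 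The quotient is taken modulo the relation generated by precomposition with old operations. Elements are therefore equivalence classes of formal composites $[\psi;x_1,\ldots,x_k]$.

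Item (i) then follows immediately. If the indexing category is empty in every arity, the coproduct is empty, so the quotient is $\varnothing$. The component $\bar\rho_B$ then has empty image, and every element of $I'_{t+1}(B)$ lies in the residual, exactly as in Proposition~\ref{prop:kan_obstruction}(i). One point needs checking here. Nullary multimorphisms $()\to B$ index the empty product, which is a singleton, so they would contribute elements even when no old types are involved. The hypothesis ``empty in every arity'' must therefore include arity $0$. I will state this explicitly, so that constants of type $B$ are not silently counted as transported evidence.

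For item (ii), suppose there is a new multimorphism $\phi:(B_1,\ldots,B_k)\to B$ whose inputs $B_i$ are transported or generator-reachable. Composing $\phi$ with the multimorphisms $uA_{ij}\to B_i$ that witness reachability of each $B_i$ produces a nonempty index object in the operadic comma category of $B$. Hence $B$ is not isolated, and when the input fibers are inhabited, its transported fiber contains the classes $[\phi\circ(\psi_1,\ldots,\psi_k);\vec x]$. The image $\mathrm{im}(\bar\rho_B)$ is a sub-algebra by the same naturality argument used in Proposition~\ref{prop:kan_obstruction}(ii), now applied to multimorphisms. The statement that $\phi$ itself is residual schema content is definitional. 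By hypothesis $\phi$ is not in $u(M_b)$, nor is it generated there, so it counts as nontrivial regime content in the sense of Definition~\ref{def:transition}.

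For item (iii), I will decompose the post-transition data into a pair: the artifact state $I'_{t+1}$ together with the operation registry $M_{b'}$. Correspondingly, the transported data is the pair $\mathrm{im}(\bar\rho)$ and $u(M_b)$. The relative cost then conditions on both components. If $L_{b'}$ is additive across this separation, it splits into an artifact term $L(I'_{t+1}\mid\mathrm{im}(\bar\rho),M_{b'})$ plus an operation term $L(M_{b'}\mid u(M_b))$. Without additivity, I would invoke the standard two-part-code inequality to obtain the subadditive upper bound: first encode the new operations, then encode the artifacts given them. This step is essentially an assumption about $L_{b'}$, so I will state it as such.

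I expect the main obstacle to be item (ii), specifically making ``transported part generated by formal composites'' precise. In the operadic colimit, the equivalence relation may identify distinct formal composites. It may also fail to reach artifacts of $I'_{t+1}(B)$ that were produced by $\phi$ acting on residual, non-transported inputs. My approach is to claim only the generation statement, namely that $\mathrm{im}(\bar\rho_B)$ is the set of images of formal composites whose inputs all come from transported artifacts. I will not claim that these images are distinct. I will also note explicitly that $\phi$ applied to residual inputs yields residual content.
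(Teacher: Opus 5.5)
Your proposal is correct and follows essentially the same route as the paper's proof: the adjunction fixes $\bar\rho$, an empty operadic comma-colimit gives $\varnothing$, a composite through the new multimorphism gives nonempty indexing data whose formal composites $\bar\rho$ evaluates in $I'_{t+1}$, and item (iii) rests on an additivity or two-part-code assumption on $L_{b'}$; your explicit handling of arity $0$, of the arity of composites with reachability witnesses, and of uninhabited input fibers is slightly more careful than the paper's. Drop or qualify one side remark, namely that $\phi$ applied to residual inputs yields residual content: in a general $\Set$-valued algebra that value can coincide with a transported element (e.g.\ if $I'_{t+1}(B)$ is a singleton already hit by $\bar\rho_B$), so the remark holds only under a free-provenance assumption in which artifacts are individuated by their lineage, and the statement does not need it.
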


\begin{proof}
The proof is the multicategorical analogue of Proposition~\ref{prop:kan_obstruction}. By assumption, $\Lan^{\mathrm m}_u$ is left adjoint to restriction and is computed pointwise by a colimit over the operadic comma category whose objects are multimorphisms from tuples of transported old types to the target type. If this indexing category is empty in every arity, the colimit in $\Set$ is $\varnothing$, forcing any accepted element of $I'_{t+1}(B)$ to lie outside transport. If a $k$-ary multimorphism into $B$ exists, the same comma-colimit has nonempty indexing data in arity $k$ and therefore contains the formal composites obtained by applying that operation to transported input artifacts; applying $\bar\rho$ evaluates those formal composites in the verified new state. Finally, a relative code that is additive over artifact fibers and operation-registry entries gives the stated decomposition of residual cost, while subadditivity gives the upper bound in the nonadditive case.
\end{proof}

\begin{remark}
For the Builder/Breaker $2\to3$ transition, the ordinary generator-level audit sees no unary morphism from the old schema into \texttt{ModeConditionedCompliance}. The multicategorical view sees the binary operation
\[
\texttt{LogNormCompliance}\times\texttt{ReLUModeAmpl}
\longrightarrow \texttt{ModeConditionedCompliance},
\]
whose inputs are themselves generator-reachable transforms of old physics-derived quantities. Thus Table~\ref{tab:kan_audit} and Fig.~\ref{fig:kan_audit_builder_breaker} are the unary-shadow audit of a native multicategorical statement: the target feature is composite-reachable, and the product operation is residual schema content.
\end{remark}

\subsection{CategoryScienceClaw mechanics figure-generation details}
\label{sec:categoryscienceclaw_methods}

The CategoryScienceClaw mechanics and materials examples were generated as replayable export trees. Each run contains an investigation JSON file, a discovery report, a categorical discovery graph, and, when needed, deterministic computational inputs. \hlyellow{The main manuscript reports the HEA fatigue-crack-growth run as the worked CategoryScienceClaw application and includes its generated PDF figure. The fiber-network model-comparison run is reported in the Supplementary Information as a supporting audit case.}

\hlyellow{The visualized computations use ScienceClaw skills and deterministic local analyses: HEA fatigue-crack-growth parsing and threshold--slope diagnostics for the main CategoryScienceClaw case, and orientation-tensor plus stress--strain regression for the supporting fiber-network run. Evidence origin is retained in the investigation JSON, so imported structures, computational surrogates, and computed input tables remain distinguishable. This provenance labeling supports reproducible mechanics reasoning and categorical audit without turning the example cases into empirical validation claims.}

The categorical discovery graph records candidate models, accepted and rejected status, gates, stress tests, and report/figure artifacts. Thus a plotted panel is traceable back to input artifacts and morphisms rather than being a standalone graphics product. The manuscript figures use the generated PDF outputs directly, preserving the provenance reported by CategoryScienceClaw.

\subsection{Case-study implementation details}

The Builder/Breaker protein-mechanics case uses Gaussian Network Model features derived from PDB chains, symbolic DAG search over physically interpretable factors, and paired MDL comparisons. The Breaker selects staged protein evidence intended to expose failure modes; the Builder proposes edits such as adding factors, removing factors, swapping terms, and thresholding. \hlyellow{The figures in this manuscript summarize the outer four-stage trajectory as well as the inner hill-climb, gate-anatomy, and feature-lifecycle diagnostics. As reported in} \cite{buehler_break_world_2026}, \hlyellow{the algorithm uses GPT-5.5 (OpenAI) as the underlying large language model for both Builder/Breaker agents, with extended reasoning enabled (\texttt{reasoning\_effort=high}). Exact bitwise replay of the proposal generator is not guaranteed because those proposals depend on closed, stochastic model calls and service-side model state. The deterministic parts of the analysis--PDB-derived GNM feature construction, regression/refitting, MDL scoring of recorded candidate states, Kan-transport audit, and figure-generation summaries--are separated from those model-call-dependent proposal steps in the released artifacts and companion repository.
}

The Kan-transport audit in Fig.~\ref{fig:kan_audit_builder_breaker} and Table~\ref{tab:kan_audit} was computed from the accepted world-model records of the same run. For each outer iteration $t$, a finite schema $\Sschema_t$ was constructed whose objects are the typed physics artifacts, observables, symbolic intermediate features, B-factor target, and singleton parameter objects present in the accepted model. The artifact-state fiber size for residue-level objects was taken from the accumulated evidence count recorded in the run, giving $122$, $263$, $691$, and $1171$ residues at iterations $0$ through $3$; chain-level physics objects used the corresponding accumulated chain counts. 
Note that the reported $R^2$ values in Fig.~\ref{fig:case_dagmdl} are stagewise descriptive summaries on these accumulated
evidence sets. They are not computed on a fixed held-out benchmark shared across all four
outer iterations. For this reason, changes in $R^2$ across outer iterations are not used as the
acceptance rule for discovery moves. Acceptance is determined by paired MDL comparison:
at each proposed revision, the incumbent and candidate symbolic DAGs are refit and scored
on the same accumulated evidence available at that stage.

For each transition $\Sschema_t\to\Sschema_{t+1}$, shared objects were transported by identity. New objects were classified in two passes. First, the generator-level comma diagnostic recorded whether the new object received an immediate unary generating morphism from a shared old object. Second, a composite-reachability pass took the transitive closure through all new morphisms, including multi-input product morphisms. Thus an object such as \texttt{ModeConditionedCompliance} is not generator-reachable from the old schema, but it is composite-reachable through newly admitted morphisms
\[
\texttt{Compliance}\to\texttt{LogNormCompliance},\qquad
\texttt{NormModeAmpl}\to\texttt{ReLUModeAmpl},
\]
followed by the product operation defining \texttt{ModeConditionedCompliance}. This two-level audit avoids treating the empty generator-level comma category as a claim of absolute isolation in the full free category. The residual fiber counts in the audit should therefore be read objectwise: for shared and generator-reachable residue-level types, the $480$-residue residual in the final transition is the newly revealed evidence slice; for composite-reachable types, the essential residual is the new operation and feature type that make the old evidence composable in a new way. MDL break gains are reported as paired acceptance gains, not as the relative code length $L_{b'}(I'_{t+1}\mid\mathrm{im}(\bar\rho))$ itself.

\hlyellow{CategoryScienceClaw is treated as ScienceClaw equipped with a typed categorical and proof-carrying layer rather than only as a numeric model-selection run. The implementation provides a ScienceClaw skill registry and adapter, controlled artifact-type metadata, an evolving artifact ledger, parent-linked immutable lineage, open needs, pressure-based reactions, mutation of active status, public discourse artifacts with verifier signals, proof certificates, and domain-specific accepted/rejected model records. Operationally, ScienceClaw supplies the executable skills, routing substrate, artifact production, and discourse-facing workflow; CategoryScienceClaw supplies the categorical contracts and proof objects that make those executions auditable. The formal reading used in the paper is therefore conservative: the code realizes a typed artifact family and an acyclic provenance hypergraph, whose path-category shadow gives the category-of-elements-style structure discussed in the Results. We use GPT-5.2 (OpenAI) as backbone LLM for the CategoryScienceClaw studies.}

% =====================================================================
\section*{Supplementary information}
% =====================================================================

\hlyellow{Supplementary Information provides additional details related to the CategoryScienceClaw case studies, including the HEA fatigue replay files, the supporting fiber-network mechanics case, and terminology scaffolding.}

\section*{Author contributions}

M.J.B. conceived the technical focus, project goals and investigation scope and wrote the initial draft of the manuscript. 
F.Y.W. designed and developed the CategoryScienceClaw system, including the agent framework, skill library, artifact system, and multi-agent coordination; ran and analyzed case studies. M.J.B. developed the Builder-Breaker model and conducted the associated analysis. All authors participated in the development of the study, analysis of results, and interpretation, and the writing of the paper.  

\section*{Use of generative AI}

During manuscript preparation, an LLM was used to assist with language editing and code editing for benchmark-generation scripts. The authors reviewed and edited the output, verified the scientific content, and are responsible for the final manuscript.

% =====================================================================
\section*{Code and data availability}
% =====================================================================

Code, generated artifacts, logs, and figures for the Builder/Breaker protein-mechanics case are available at \url{https://github.com/lamm-mit/BreakingTheWorld}. \hlyellow{The released materials support replay of the deterministic analysis steps, including recorded protein-mechanics feature tables, refits, MDL comparisons on recorded states, and post-hoc categorical audits; exact regeneration of every raw LLM proposal is not claimed.} The ScienceClaw framework is available at \url{https://github.com/lamm-mit/scienceclaw} and the CategoryScienceClaw mechanics branch at \url{https://github.com/lamm-mit/scienceclaw/tree/categoryscienceclaw-mechanics}. The Infinite discourse platform is available at \url{https://github.com/lamm-mit/infinite}. \hlyellow{The CategoryScienceClaw HEA fatigue-crack-growth investigation is fully reproducible and available at} \url{https://github.com/lamm-mit/scienceclaw/tree/categoryscienceclaw-mechanics/categoryscienceclaw/HEA} \hlyellow{and includes HEA input data from Materials Cloud and typed audit artifacts (input database, rejected threshold-only model, accepted two-axis model, science claim), the categorical discovery graph with full provenance.}

% =====================================================================
\section*{Acknowledgments}
% =====================================================================

Support from MGAIC, ONR, ARO, AFOSR, NIH, DSO, NSF, and additional sources is gratefully acknowledged. Part of this work was supported by the U.S. Department of Energy, Office of Science, Office of Advanced Scientific Computing Research and Office of Basic Energy Sciences, Scientific Discovery through Advanced Computing (SciDAC) program under the FORUM-AI project.

\section*{Competing interests}
The authors declare that they have no competing interests.

% =====================================================================
% Bibliography
% =====================================================================

%\bibliographystyle{unsrtnat}
\bibliographystyle{naturemag}

\bibliography{refs}

\clearpage % Ensures the SI starts on a completely new page

\includepdf[pages=-]{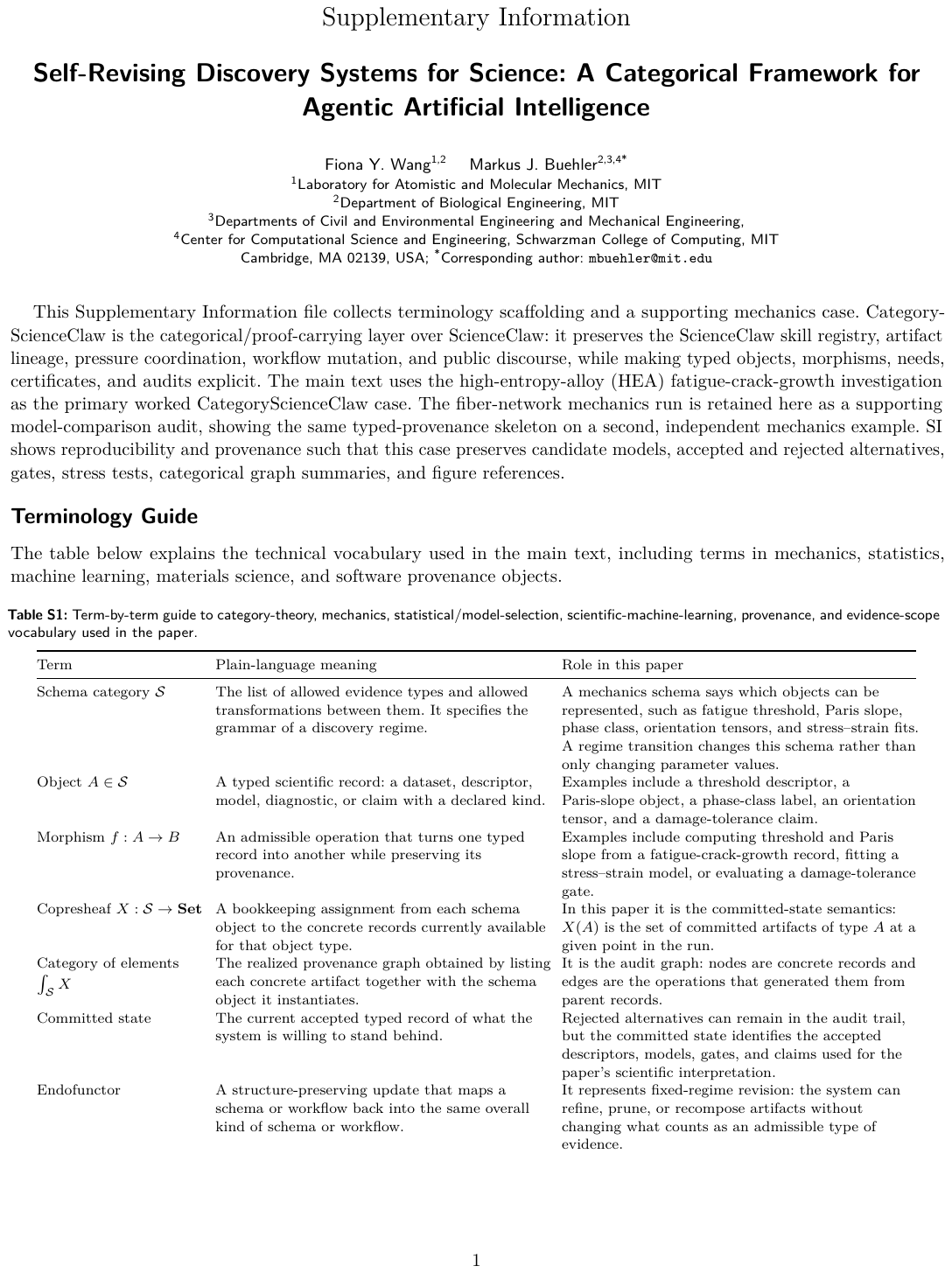}

\end{document}